\documentclass{article} %
\usepackage[nonatbib, preprint]{neurips_2021}
\usepackage{times}

\usepackage[numbers]{natbib}

\usepackage{amsmath,amsfonts,bm}

\def\eqref#1{equation~\ref{#1}}

\def\1{\bm{1}}

\DeclareMathAlphabet{\mathsfit}{\encodingdefault}{\sfdefault}{m}{sl}
\SetMathAlphabet{\mathsfit}{bold}{\encodingdefault}{\sfdefault}{bx}{n}

\newcommand{\E}{\mathbb{E}}

\DeclareMathOperator*{\argmax}{arg\,max}
\DeclareMathOperator*{\argmin}{arg\,min}

\usepackage{mdpn}
\usepackage{todonotes}

\usepackage[utf8]{inputenc} %
\usepackage[T1]{fontenc}    %
\usepackage{hyperref}       %
\usepackage{url}            %
\usepackage{booktabs}       %
\usepackage{amsfonts}       %
\usepackage{nicefrac}       %
\usepackage{microtype}      %
\usepackage{bm,bbm,amsmath,amsfonts,amssymb}
\usepackage{xspace}
\usepackage{xcolor}
\usepackage{eucal}
\usepackage{amsmath}
\usepackage{mathtools}  %
\usepackage{appendix}

\usepackage{enumitem}

\usepackage{algorithm}
\usepackage[ruled,vlined,algo2e,linesnumbered]{algorithm2e}

\usepackage{hyperref}

\usepackage[center]{caption}
\usepackage{subcaption}
\usepackage{wrapfig}

\usepackage{amsthm}	%
\newtheorem{definition}{Definition}[]

\newtheorem{lemma}{Lemma}[]

\usepackage{theoremref} %
\usepackage{thmtools}
\usepackage{thm-restate} %
\usepackage{amsxtra} %
\usepackage{hyperref}
\usepackage{url}

\usepackage{wrapfig}

\usepackage{textcase}

\usepackage{breqn}

\DeclareMathOperator*{\defd}{:=}
\DeclareMathOperator*{\gset}{\mathcal{G}}

\newcommand{\aim}{\textsc{aim}}

\title{Adversarial Intrinsic Motivation for Reinforcement Learning}

\author{Ishan Durugkar  \\
Department of Computer Science\\
The University of Texas at Austin\\
Austin, TX, USA 78703 \\
\texttt{ishand@cs.utexas.edu} \\
\And
Mauricio Tec  \\
Department of Statistics and Data Sciences\\
The University of Texas at Austin\\
Austin, TX, USA 78703 \\
\texttt{mauriciogtec@utexas.edu} \\
\And
Scott Niekum  \\
Department of Computer Science\\
The University of Texas at Austin\\
Austin, TX, USA 78703 \\
\texttt{sniekum@cs.utexas.edu} \\
\And
Peter Stone \\
Department of Computer Science\\
The University of Texas at Austin\\
Austin, TX, USA 78703  and \\
Sony AI \\
\texttt{pstone@cs.utexas.edu}
}

\begin{document}

\maketitle

\begin{abstract}
Learning with an objective to minimize the mismatch with a reference distribution has been shown to be useful for generative modeling and imitation learning.
In this paper, we investigate whether one such objective, the Wasserstein-1 distance between a policy's state visitation distribution and a target distribution, can be utilized effectively for reinforcement learning (RL) tasks.
Specifically, this paper focuses on goal-conditioned reinforcement learning where the idealized (unachievable) target distribution has full measure at the goal.
This paper introduces a quasimetric specific to Markov Decision Processes (MDPs) and uses this quasimetric to estimate the above Wasserstein-1 distance.
It further shows that the policy that minimizes this Wasserstein-1 distance is the policy that reaches the goal in as few steps as possible.
Our approach, termed Adversarial Intrinsic Motivation (\textsc{aim}), estimates this Wasserstein-1 distance through its dual objective and uses it to compute a supplemental reward function.
Our experiments show that this reward function changes smoothly with respect to transitions in the MDP and directs the agent's exploration to find the goal efficiently.
Additionally, we combine \aim\ with Hindsight Experience Replay (\textsc{her}) and show that the resulting algorithm accelerates learning significantly on several simulated robotics tasks when compared to other rewards that encourage exploration or accelerate learning.
\end{abstract}

\section{Introduction}

Reinforcement Learning (RL) \citep{sutton2018reinforcement} deals with the problem of learning a policy to accomplish a given task in an optimal manner.
This task is typically communicated to the agent by means of a reward function.
If the reward function is sparse \citep{Arumugam2021AnIP} (e.g., most transitions yield a reward of $0$), much random exploration might be needed before the agent experiences any signal relevant to learning \citep{bellemare2016unifying,arjona2019rudder}.

Some of the different ways to speed up reinforcement learning by modifying or augmenting the reward function are shaped rewards \citep{ng1999policy}, redistributed rewards \citep{arjona2019rudder}, intrinsic motivations \citep{baldassarre_intrinsic_2013,singh_intrinsically_2010, sorg2010reward, sorg2010internal, niekum2010evolved, oudeyer2009intrinsic}, and learned rewards \citep{zheng_learning_2018, niekum2010evolved}.
Unfortunately, the optimal policy under such modified rewards might sometimes be different than the optimal policy under the task reward \cite{ng1999policy,clark_faulty_2016}.
The problem of learning a reward signal that speeds up learning by communicating \emph{what to do} but does not interfere by specifying \emph{how to do it} is thus a useful and complex one \citep{zheng_what_2020}.

This work considers whether a task-dependent reward function learned based on the distribution mismatch between the agent's state visitation distribution and a target task (expressed as a distribution) can guide the agent towards accomplishing this task.
Adversarial methods to minimize distribution mismatch have been used with great success in generative modeling \citep{goodfellow_generative_2014} and imitation learning \citep{ho2016generative, fu_learning_2018,xiao_wasserstein_2019, torabi_generative_2019, ghasemipour2020divergence}.
In both these scenarios, the task is generally to minimize the mismatch with a target distribution induced by the data or expert demonstrations.
Instead, we consider the task of goal-conditioned RL, where the ideal target distribution assigns full measure to a goal state.
While the agent can never match this idealized target distribution perfectly unless starting at the goal, intuitively, minimizing the mismatch with this distribution should lead to trajectories that maximize the proportion of the time spent at the goal, thereby prioritizing transitions essential to doing so.

The theory of optimal transport \citep{villani2008optimal} gives us a way to measure the distance between two distributions (called the Wasserstein distance) even if they have disjoint support.
Previous work \citep{arjovsky_wasserstein_2017, gulrajani_improved_2017} has shown how a neural network approximating a potential function may be used to estimate the Wasserstein-1 distance using its dual formulation, but assumes that the metric space this distance is calculated on is Euclidean.
A Euclidean metric might not be the appropriate metric to use in more general RL tasks however, such as navigating in a maze or environments where the state features change sharply with transitions in the environment.

This paper introduces a quasimetric tailored to Markov Decision Processes (MDPs), the time-step metric, to measure the Wasserstein distance between the agent's state visitation distribution and the idealized target distribution.
While this time-step metric could be an informative reward on its own, estimating it is a problem as hard as policy evaluation \citep{guillot2020stochastic}.
Instead, we show that the dual objective, which maximizes difference in potentials while utilizing the structure of this quasimetric for the necessary regularization, can be optimized through sampled transitions.

We use this dual objective to estimate the Wasserstein-1 distance and propose a reward function based on this estimated distance.
An agent that maximizes returns under this reward minimizes this Wasserstein-1 distance.
The competing objectives of maximizing the difference in potentials for estimating the Wasserstein distance and minimizing it through reinforcement learning on the subsequent reward function leads to our algorithm, Adversarial Intrinsic Motivation (\aim).

Our analysis shows that if the above Wasserstein-1 distance
is computed using the time-step metric, then minimizing it leads to a policy that reaches the goal in the minimum expected number of steps.
It also shows that if the environment dynamics are deterministic, then this policy is the optimal policy.

In practice, minimizing the Wasserstein distance works well even when the environment dynamics are stochastic.
Our experiments show that \aim\ learns a reward function that changes smoothly with transitions in the environment.
We further conduct experiments on the family of goal-conditioned reinforcement learning problems \cite{andrychowicz_hindsight_2018, Schaul2015UniversalVF} and show that \aim\ when used along with hindsight experience replay (\textsc{her}) greatly accelerates learning of an effective goal-conditioned policy compared to learning with \textsc{her} and the sparse task reward.
Further, our experiments show that this acceleration is similar to the acceleration observed by using the actual distance to the goal as a dense reward.

\section{Related Work}

We highlight the related work based on the various aspects of learning that this work touches,
namely intrinsic motivation, goal-conditioned reinforcement learning, and adversarial imitation learning.

\subsection{Intrinsic Motivation}

Intrinsic motivations \citep{baldassarre_intrinsic_2013, oudeyer2009intrinsic, oudeyer2008can} are rewards presented by an agent to itself in addition to the external task-specific reward.
Researchers have pointed out that such intrinsic motivations are a characteristic of naturally intelligent and curious agents \citep{gottlieb2013information, baldassarre2011intrinsic, baldassarre2014intrinsic}.
Intrinsic motivation has been proposed as a way to encourage RL agents to learn skills \citep{barto2004intrinsically, barto2005intrinsic, singh2005intrinsically, santucci2013best} that might be useful across a variety of tasks, or as a way to encourage exploration \citep{bellemare2016unifying, csimcsek2006intrinsic, baranes2009r, forestier2017intrinsically}.
The optimal reward framework \citep{singh_intrinsically_2010, sorg2010internal} and shaped rewards \citep{ng1999policy} (if generated by the agent itself) also consider intrinsic motivation as a way to assist an RL agent in learning the optimal policy for a given task.
Such an intrinsically motivated reward signal has previously been learned through various methods such as evolutionary techniques \citep{niekum2010evolved, schembri2007evolving}, meta-gradient approaches \citep{sorg2010reward, zheng_learning_2018,zheng_what_2020}, and others.
The Wasserstein distance has been used to present a valid reward for imitation learning \citep{xiao_wasserstein_2019, dadashi_primal_2020} as well as program synthesis \citep{ganin2018synthesizing}.

\subsection{Goal-Conditioned Reinforcement Learning} \label{sec:rel_goal}

Goal-conditioned reinforcement learning \citep{kaelbling1993learning} can be considered a form of multi-task reinforcement learning \citep{caruana1997multitask} where the agent is given the goal state it needs to reach at the beginning of every episode, and the reward function is sparse with a non-zero reward only on reaching the goal state.
\textsc{uvfa} \citep{Schaul2015UniversalVF}, \textsc{her} \citep{andrychowicz_hindsight_2018}, and others \citep{zhang_automatic_2020,Ding2019GoalconditionedIL} consider this problem of reaching certain states in the environment.
Relevant to our work, \citet{Venkattaramanujam2019SelfsupervisedLO} learns a distance between states using a random walk that is then used to shape rewards and speed up learning, but requires goals to be visited before the distance estimate is useful.
DisCo RL \citep{Nasiriany:EECS-2020-151} extends the idea of goal-conditioned RL to distribution-conditioned RL.

Contemporaneously, \citet{ eysenbach2021replacing,eysenbach2021clearning} has proposed a method which considers goals and examples of success and tries to predict and maximize the likelihood of seeing those examples under the current policy and trajectory.
For successful training, this approach needs the agent to actually experience the goals or successes.
Their solution minimizes the Hellinger distance to the goal, a form of $f$-divergence.
\aim\ instead uses the Wasserstein distance which is theoretically more informative when considering distributions that are disjoint, and does not require the assumption that the agent has already reached the goal through random exploration.
Our experiments in fact verify the hypothesis that \aim\ induces a form of directed exploration in order to reach the goal.

\subsection{Adversarial Imitation Learning and Minimizing Distribution Mismatch} \label{sec:AIL}

Adversarial imitation learning \citep{ho2016generative,fu_learning_2018,xiao_wasserstein_2019,torabi_generative_2019,ghasemipour2020divergence} has been shown to be an effective method to learn agent policies that minimize distribution mismatch between an agent's state-action visitation distribution and the state-action visitation distribution induced by an expert's trajectories.
In most cases this distribution that the expert induces is achievable by the agent and hence these techniques aim to match the expert distribution exactly.
In the context of goal-conditioned reinforcement learning, GoalGAIL \citep{Ding2019GoalconditionedIL} uses adversarial imitation learning with a few expert demonstrations to accelerate the learning of a goal-conditioned policy.
In this work, we focus on unrealizable target distributions that cannot be completely matched by the agent, and indeed, are not induced by any trajectory distribution.

\textsc{fairl} \citep{ghasemipour2020divergence} is an adversarial imitation learning technique which minimizes the Forward KL divergence and has been shown experimentally to cover some hand-specified state distributions, given a smoothness regularization as used by WGAN \citep{gulrajani_improved_2017}.
$f$-\textsc{irl} \citep{ni2020fIRL} learns a reward function where the optimal policy matches the expert distribution under the more general family of $f$-divergences.
Further, techniques beyond imitation learning \citep{lee2019efficient, hazan2019provably} have looked at matching a uniform distribution over states to guarantee efficient exploration.

\section{Background}

In this section we first set up the goal-conditioned reinforcement learning problem, and then give a brief overview of optimal transport.

\subsection{Goal-Conditioned Reinforcement Learning} \label{sec:goal}

Consider a goal-conditioned MDP as the tuple $\langle \sset, \aset, \gset, P, \rho_0, \sigma, \gamma \rangle$ with discrete state space $\sset$, discrete action space $\aset$, a subset of states which is the goal set  $\gset \subseteq \sset$, and transition dynamics $P: \sset \times \aset \times \gset \longmapsto \Delta(\sset)$ ($\Delta(\cdot)$ is a distribution over a set) which might vary based on the goal (see below).
$\rho_0: \Delta(\sset)$ is the starting state distribution, and $\sigma: \Delta(\gset)$ is the distribution a goal is drawn from.
$\gamma \in [0, 1)$ is the discount factor.
We use discrete states and actions for ease of exposition, but our idea extends to continuous states and actions, as seen in the experiments.

At the beginning of an episode, the starting state is drawn from $\rho_0$ and the goal for that episode is drawn from $\sigma$.
The reward function $r: \sset \times \aset \times \sset \times \gset \longmapsto \mathbb{R}$ is deterministic, and $r(s_t, a_t, s_{t+1}|s_g) \defd \mathbb{I}[s_{t+1}=s_g]$.
That is, there is a positive reward when an agent reaches the goal ($s_{t+1} = s_g$), and $0$ everywhere else.
Since the goal is given to the agent at the beginning of the episode, in goal-conditioned RL the agent knows what this task reward function is (unlike the more general RL problem).
The transition dynamics are goal-conditioned as well, with an automatic transition to an absorbing state $\Bar{s}$ on reaching the goal $s_g$ and then staying in that state with no rewards thereafter ($P(\Bar{s}|s_g, a, s_g) = 1 \;\forall\; a \in \aset$ and $P(\Bar{s}|\Bar{s}, a, s_g) = 1 \;\forall\; a \in \aset$).
In short, the episode terminates on reaching the goal state.

The agent takes actions in this environment based on a policy $\pi \in \Pi: \sset \times \gset \longmapsto \Delta(\aset)$.
The return $H_g$ for an episode with goal $s_g$ is the discounted cumulative reward over that episode $H_g = \sum_{t=0}^{\infty} \gamma^t r(s_t, a_t, s_{t+1} | s_g)$, where $s_0 \sim \rho_0$, $a_t \sim \pi(\cdot|s_t, s_g)$, and  $s_{t+1} \sim P(\cdot|s_t, a_t, s_g)$.
The agent aims to find the policy $\pi^* = \argmax_{\pi \in \Pi} \mathbb{E}_{g \in \mathcal{G}} \mathbb{E}_{s_0 \sim \rho_0} \mathbb{E}_{\pi}[H_g]$ that maximizes the expected returns in this goal-conditioned MDP.
For a policy $\pi$, the agent's goal-conditioned state distribution $\rho_{\pi}(s|s_g) = \mathbb{E}_{s_0 \sim \rho_0}[(1 - \gamma)\sum_{t=0}^{\infty} \gamma^t P(s_t=s|\pi, s_g)]$.
Overloading the terminology a bit, we also define the goal-conditioned target distribution $\rho_g(s| s_g) = \delta(s_g)$, a Dirac measure at the goal state $s_g$.

While learning using traditional RL paradigms is possible in goal-conditioned RL, there has also been previous work (Section \ref{sec:rel_goal}) on leveraging the structure of the problem across goals.
Hindsight Experience Replay (\textsc{her}) \citep{andrychowicz_hindsight_2018} attempts to speed up learning in this sparse reward setting by taking episodes of agent interactions, where they might not have reached the goal specified for that episode, and relabeling the transitions with the goals that \emph{were} achieved during the episode.
Off-policy learning algorithms are then used to learn from this relabeled experience.

\subsection{Optimal Transport and Wasserstein-1 Distance} \label{sec:opt}

The theory of optimal transport \citep{villani2008optimal,bousquet_optimal_2017} considers the question of how much work must be done to transport one distribution to another optimally, where this notion of work is defined by the use of a ground metric $d$.
More concretely, consider a metric space ($\mathcal{M}$, $d$) where $\mathcal{M}$ is a set and $d$ is a metric on $\mathcal{M}$ (Definitions in Appendix \ref{sec:metrics}).
For two distributions $\mu$ and $\nu$ with finite moments on the set $\mathcal{M}$, the Wasserstein-$p$ distance is denoted by: 
\begin{align} \label{eqn:w-p}
    W_p(\mu, \nu) \defd \inf_{\zeta \in Z(\mu, \nu)} \E_{(X,Y)\sim \zeta}[d(X, Y)^p]^{1/p}
\end{align}

where $Z$ is the space of all possible couplings, i.e.\ joint distributions $\zeta \in \Delta(\mathcal{M}\times\mathcal{M})$ whose marginals are $\mu$ and $\nu$ respectively.
Finding this optimal coupling tells us what is the least amount of work, as measured by $d$, that needs to be done to convert $\mu$ to $\nu$.
This Wasserstein-$p$ distance can then be used as a cost function (negative reward) by an RL agent to match a given target distribution \citep{xiao_wasserstein_2019,dadashi_primal_2020,ganin2018synthesizing}.

\begin{wrapfigure}{l}{0.32\textwidth} 
  \vspace{-20pt}
  \begin{center}
      \includegraphics[width=0.3\textwidth]{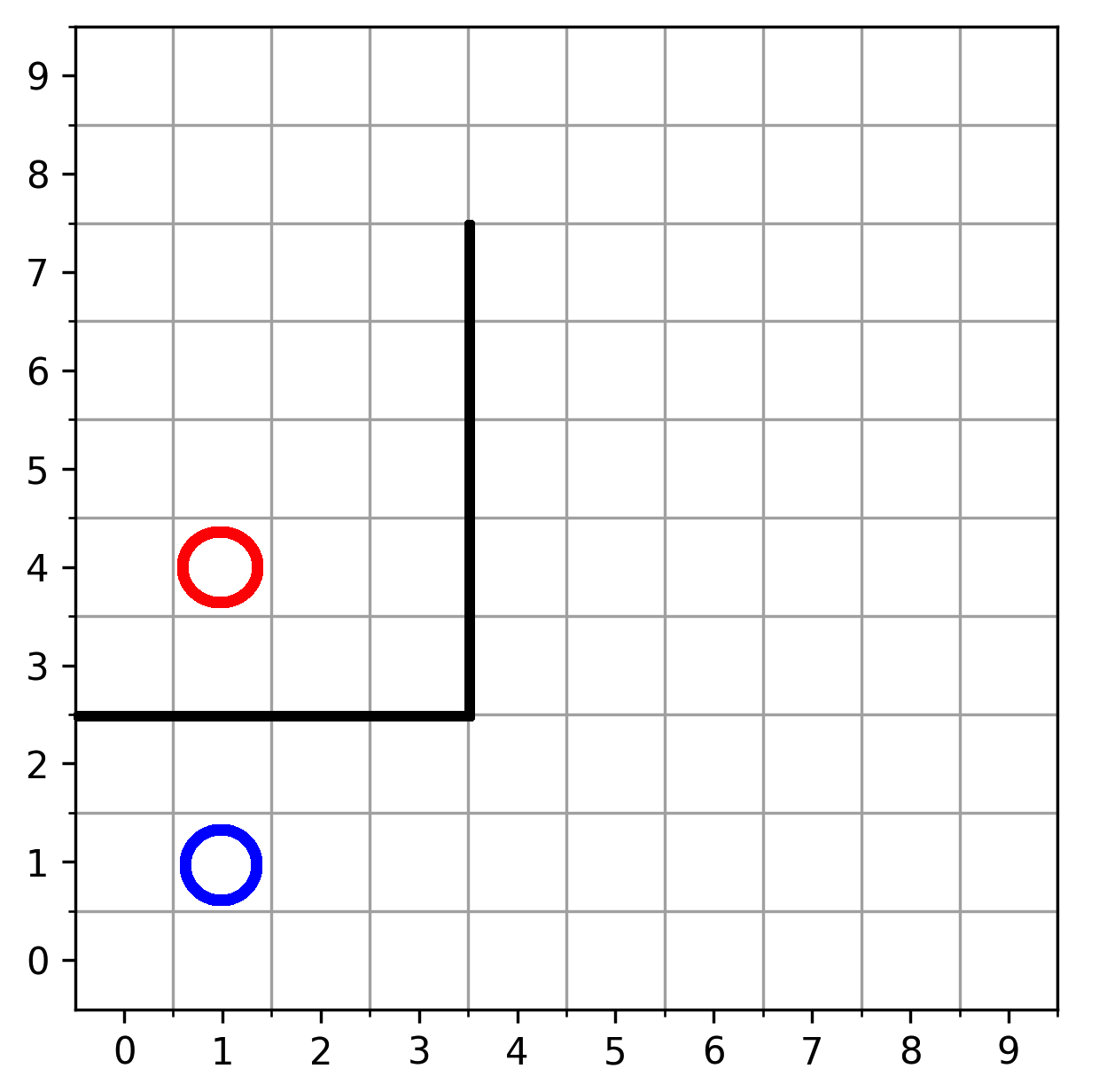}
  \end{center}
  \caption{Grid world example} \label{fig:example}
  \vspace{-5pt}
\end{wrapfigure}

Finding the ideal coupling above is generally considered intractable.
However, if what we need is an accurate estimate of the Wasserstein distance and not the optimal transport plan we can turn our attention to the dual form of the Wasserstein-1 distance.
The Kantorovich-Rubinstein duality \citep{villani2008optimal, peyre_computational_2020} for the Wasserstein-1 distance (which we refer to simply as the Wasserstein distance hereafter) on a ground metric $d$ gives us:
\begin{align} \label{eqn:KRdual}
    W_1(\mu, \nu) = \sup_{\text{Lip}(f) \leq 1} \mathbb{E}_{y \sim \nu}\left[f(y)\right] - \mathbb{E}_{x \sim \mu} \left[f(x)\right]
\end{align}

where the supremum is over all $1$-Lipschitz functions $f: \mathcal{M} \longmapsto \mathbb{R}$ in the metric space.
Importantly, \citet{jevtic2018combinatorial} has recently shown that this dual formulation extends to quasimetric spaces as well.
More details, such as the definition of the Lipschitz constant of a function and special cases used in WGAN \citep{arjovsky_wasserstein_2017, gulrajani_improved_2017, ghasemipour2020divergence} are elaborated in Appendix \ref{app:opt}.
One last note of importance is that the Lipschitz constant of the potential function $f$ is computed based on the ground metric $d$.

\section{Time-Step Metric} \label{sec:cmetric}

The choice of the ground metric $d$ is important when computing the Wasserstein distance between two distributions.
That is, if we want the Wasserstein distance to give an estimate of the work needed to transport the agent's state visitation distribution to the goal state,
the ground metric should incorporate a notion of this work.

Consider the grid world shown in \autoref{fig:example}, where a wall (bold line) marks an impassable barrier in part of the state space.
If the states are specified by their Cartesian coordinates on the grid, the Manhattan distance between the states specified by the blue and red circles is not representative of the optimal cost to go from one to the other.
This mismatch would lead to an underestimation of the work involved if the two distributions compared were concentrated at those two circles.
Similarly, there will be errors in estimating the Wasserstein distance if the grid world is toroidal (where an agent is transported to the opposite side of the grid if it walks off one side) or if the transitions are asymmetric (windy grid world \citep{sutton2018reinforcement}).

To estimate the work needed to transport measure in an MDP when executing a policy $\pi$, we consider a \emph{quasimetric} -- a metric that does not need to be symmetric -- dependent on the number of transitions experienced before reaching the goal when executing that policy.

\begin{definition} \label{def:tmetric}
The \textbf{time-step metric} $d^\pi_T$ in an MDP with state space $\sset$, action space $\aset$, transition function $P$, and agent policy $\pi$ is a quasimetric
where the distance from state $s \in \sset$ to state $s_g \in \sset$ is based on the expected number of transitions under policy $\pi$.
\begin{align*}
    d^\pi_T(s, s_g) \defd \mathbb{E}\; \left[T(s_g |\pi, s)\right]
\end{align*}
where $T(s_g| \pi, s)$ is the random variable for the first time-step that state $s_g$ is encountered by the agent after starting in state $s$ and following policy $\pi$.
\end{definition}

This quasimetric has the property that the per step cost is uniformly $1$ for all transitions except ones from the goal to the absorbing state (and the absorbing state to itself), which are $0$.
Thus, it can be written recursively as:
\begin{align} \label{eqn:rec_time_d}
    d^\pi_T(s, s_g) = \begin{cases} 0 & \text{if } s = s_g \\
    1 + \mathop{\mathbb{E}}_{a \sim \pi(\cdot|s, s_g)}\mathop{\mathbb{E}}_{s' \sim  P(\cdot|s, a, s_g)}\left[  d^\pi_T(s', s_g) \right]& \text{otherwise} \end{cases}
\end{align}

Recall that in order to estimate the Wasserstein distance using the dual (\autoref{eqn:KRdual}) in a metric space where the ground metric $d$ is this time-step metric, the potential function $f: \sset \longmapsto \mathbb{R}$ needs to be $1$-Lipschitz with respect to $d^\pi_T$.
In \autoref{app:quasi_lip} we prove that $L$-Lipschitz continuity can be ensured by enforcing that the difference in values of $f$ on expected transitions from every state are bounded by $L$, implying
\begin{align} \label{eqn:lip_mdp}
 \text{Lip}(f) \leq \sup_{s \in \sset}\left\{\mathop{\mathbb{E}}_{a \sim \pi(\cdot|s, s_g)} \mathop{\mathbb{E}}_{s' \sim P(\cdot|s, a, s_g)}\left[\left\lvert f(s) - f(s')\right\rvert\right] \right\} .
\end{align}
Note that finding a proper way to enforce the Lipschitz constraint in adversarial methods remains an open problem \citep{liu2020lipschitz}.
However, for the time-step metric considered here, \eqref{eqn:lip_mdp} is one elegant way of doing so.
By ensuring that the Kantorovich potentials do not drift too far from each other on expected transitions under agent policy $\pi$ in the MDP, the conditions necessary for the potential function to estimate the Wasserstein distance can be maintained \citep{villani2008optimal,arjovsky_wasserstein_2017}.
Finally, the minimum distance $d^\blacklozenge_T$ from state $s$ to a given goal state $s_g$ (corresponding to policy $\pi^{\blacklozenge}$) is defined by the Bellman optimality condition (\autoref{eqn:opt_dt} in \autoref{app:proofs}).

Consider how the time-step distance to the goal and the value function for goal-conditioned RL relate to each other.
When the reward is $0$ everywhere except for transitions to the goal state, the value becomes $V^\pi(s|s_g) = \mathbb{E} \left[ \gamma^{T(s_g|\pi, s)} \right]$.
$d^\pi_T(s_0, s_g)$ and $V(s_0|s_g)$ are related as follows.

\begin{restatable}{proposition}{lowerbound}
\label{prop:lower_b}
A lower bound on the value of any state under a policy $\pi$ can be expressed in terms of the time-step distance from that state to the goal: $V(s_0|s_g) \geq \gamma^{d^\pi_T(s_0, s_g)}$.
\end{restatable}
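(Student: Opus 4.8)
The plan is to recognize the claimed inequality as a direct instance of Jensen's inequality applied to the convex map $t \mapsto \gamma^t$. First I would rewrite both sides in terms of the single random variable $T := T(s_g \mid \pi, s_0)$, the first hitting time of the goal. The excerpt already supplies both ingredients: from the observation that reward is earned only on the transition into $s_g$, the value satisfies $V^\pi(s_0 \mid s_g) = \mathbb{E}\!\left[\gamma^{T}\right]$, while Definition~\ref{def:tmetric} gives $d^\pi_T(s_0, s_g) = \mathbb{E}[T]$. Hence the target inequality $V(s_0 \mid s_g) \geq \gamma^{d^\pi_T(s_0, s_g)}$ is exactly the statement $\mathbb{E}[\gamma^{T}] \geq \gamma^{\mathbb{E}[T]}$.

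The key step is convexity. For fixed $\gamma \in (0,1)$ the function $\phi(t) = \gamma^{t} = e^{t \ln \gamma}$ has $\phi''(t) = (\ln \gamma)^2 \gamma^{t} > 0$, so $\phi$ is convex on $[0,\infty)$. Jensen's inequality for the convex $\phi$ and the nonnegative random variable $T$ then gives $\mathbb{E}[\phi(T)] \geq \phi(\mathbb{E}[T])$, which is precisely the claim. The degenerate boundary value $\gamma = 0$ would be dispatched separately using the conventions $\gamma^{\infty} = 0$ and $0^0 = 1$.

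The main obstacle here is technical rather than conceptual: one must accommodate the event that the goal is never reached, on which $T = \infty$. On this event $\gamma^{T} = 0$ (since $0 \le \gamma < 1$), and if it carries positive probability then $\mathbb{E}[T] = \infty$, forcing the right-hand side $\gamma^{\mathbb{E}[T]} = 0$ as well, so the inequality holds trivially; the same reasoning covers the case where $T$ is almost surely finite but has infinite expectation. I would therefore invoke Jensen's inequality in the form valid for extended-real-valued integrands, or simply split into the cases $\mathbb{E}[T] < \infty$ and $\mathbb{E}[T] = \infty$ to sidestep any measure-theoretic ambiguity. Once the two expectations are identified with $V$ and $d^\pi_T$ and convexity of $\phi$ is noted, the conclusion is immediate.
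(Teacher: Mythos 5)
Your proof takes exactly the same route as the paper's: both identify $V^\pi(s_0\mid s_g)=\mathbb{E}[\gamma^{T}]$ and $d^\pi_T(s_0,s_g)=\mathbb{E}[T]$, and then apply Jensen's inequality to the convex map $t\mapsto\gamma^{t}$. Your additional care with the events $T=\infty$ and the boundary case $\gamma=0$ is a welcome refinement the paper's one-line proof omits, but it does not change the argument.
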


The proofs for all theoretical results are in \autoref{app:proofs}.
The Jensen gap $\Delta^\pi_{\text{Jensen}}(s):=V^\pi(s|s_g) -\gamma^{d^\pi_T(s, s_g)} $ describes the sharpness of the lower bound in the proposition above and it is zero if and only if $\mathrm{Var}(T(s_g|\pi, s))=0$ \citep{liao2018sharpening}.
From this line of reasoning, we deduce the following proposition:
\begin{restatable}{proposition}{policyCorrespondence}
\label{prop:corr}
If the transition dynamics are deterministic, the policy that maximizes expected return is the policy that minimizes the time-step metric ($\pi^* = \pi^\blacklozenge$).
\end{restatable}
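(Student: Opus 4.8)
The plan is to exploit the exact relationship between the value function and the time-step metric that the Jensen gap makes available once the dynamics are deterministic. First I would reduce the comparison to deterministic stationary policies. Both quantities being optimized are instances of standard optimal-control problems on the (finite) goal-conditioned MDP: maximizing $\mathbb{E}_{g}\mathbb{E}_{s_0\sim\rho_0}\mathbb{E}_\pi[H_g]$ is a discounted-return problem whose value from $s_0$ is exactly $V^\pi(s_0|s_g)$, while minimizing $d^\pi_T(\cdot,s_g)$ (with uniform per-step cost $1$) is a shortest-path / minimum-expected-hitting-time problem. For each of these there exists a deterministic stationary policy that is optimal simultaneously from every starting state (indeed $\pi^\blacklozenge$ is defined through the Bellman optimality condition, which furnishes such a uniformly optimal policy). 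It therefore suffices to carry out the comparison over deterministic policies and over each fixed state--goal pair, the averages over $\rho_0$ and $\sigma$ following automatically.

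Next I would observe that for a deterministic policy $\pi$ in an MDP with deterministic dynamics, the trajectory from any state $s$ is completely determined, so the hitting time $T(s_g|\pi,s)$ is a degenerate random variable and $\mathrm{Var}(T(s_g|\pi,s))=0$. By the characterization of the Jensen gap stated just before the proposition, this forces $\Delta^\pi_{\text{Jensen}}(s)=0$, so the bound of Proposition~\ref{prop:lower_b} becomes an equality:
\begin{align*}
    V^\pi(s|s_g) = \gamma^{d^\pi_T(s, s_g)}.
\end{align*}
If $s_g$ is unreachable under $\pi$ then $T=\infty$, giving $V^\pi(s|s_g)=0$ and $d^\pi_T(s,s_g)=\infty$, which remains consistent with this identity since $\gamma\in[0,1)$.

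The final step is purely monotonicity: because $\gamma\in[0,1)$, the map $x\mapsto\gamma^x$ is strictly decreasing on $[0,\infty]$. Hence, among deterministic policies, $V^\pi(s|s_g)$ is maximized exactly when $d^\pi_T(s,s_g)$ is minimized, and by the uniform optimality established in the first step this holds simultaneously at every state $s$ and goal $s_g$. Consequently the value-maximizing policy $\pi^*$ and the time-step-minimizing policy $\pi^\blacklozenge$ select the same action wherever the optimum is attained, which yields $\pi^*=\pi^\blacklozenge$.

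I expect the main obstacle to be the first step rather than the arithmetic. One must justify that restricting to deterministic stationary policies loses no generality for \emph{either} objective, and that each objective admits a policy optimal uniformly over all starting states, so that "the" optimal policy is well defined and the state-by-state comparison is legitimate. This is precisely where deterministic dynamics are essential: a stochastic policy would keep $\mathrm{Var}(T)>0$ and reopen a strictly positive Jensen gap, so the identity $V^\pi=\gamma^{d^\pi_T}$ could fail and the equivalence between the two optimizations would break. Once the reduction to deterministic optima is in hand, the Jensen-gap identity and the strict monotonicity of $\gamma^x$ make the conclusion immediate.
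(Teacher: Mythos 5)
Your proposal is correct and follows essentially the same route as the paper: deterministic dynamics together with a deterministic policy force $\mathrm{Var}(T(s_g|\pi,s))=0$, the Jensen gap vanishes so $V^\pi(s|s_g)=\gamma^{d^\pi_T(s,s_g)}$, and strict monotonicity of $x\mapsto\gamma^x$ identifies the two optima. The only difference is that you make explicit the reduction to deterministic stationary policies and the uniform-over-states optimality, which the paper dispatches with the parenthetical ``as is the case for the optimal policy''; your added care there is a strengthening of exposition, not a different argument.
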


\section{Wasserstein-1 Distance for Goal-Conditioned Reinforcement Learning}
\label{sec:wass_theory}

In this section we consider the problem of goal-conditioned reinforcement learning. In Section \ref{sec:wass_opt} we analyze the Wasserstein distance computed under the time-step metric $d^\pi_T$.
Section \ref{sec:alg} proposes an algorithm, Adversarial Intrinsic Motivation (\textsc{aim}), to learn the potential function for the Kantorovich-Rubinstein duality used to estimate the Wasserstein distance, and giving an intrinsic reward function used to update the agent policy in tandem.

\subsection{Wasserstein-1 Distance under the Time-Step Metric} \label{sec:wass_opt}

From Sections \ref{sec:opt} and \ref{sec:cmetric} the Wasserstein distance under the time-step metric $d^\pi_T$ of an agent policy $\pi$ with visitation measure $\rho_\pi$ to a particular goal $s_g$ and its distribution $\rho_g$ can be expressed  as:
\begin{align} \label{eqn:dt_W1}
    W_1^\pi(\rho_\pi, \rho_g) = \textstyle{\sum_{s \in \sset}} \rho_\pi(s|s_g) d^\pi_T(s, s_g)
\end{align}
where $W_1^\pi$ refers to the Wasserstein distance with the ground metric $d^{\pi}_T$.

The following proposition shows that the Wasserstein distance decreases as $d^{\pi}_T(s, s_g)$ decreases, while also revealing a surprising connection with the Jensen gap.

\begin{restatable}{proposition}{analytical}
\label{prop:analytical}
For a given policy $\pi$, the Wasserstein distance of the state visitation measure of that policy from the goal state distribution $\rho_g$ under the ground metric $d^{\pi}_T$ can be written as
\begin{align}
    W_{1}^{\pi}(\rho_\pi, \rho_g) = \mathop{\E}_{s_0 \sim \rho_0} \left[ h(d^{\pi}_T(s_0, s_g)) + \frac{\gamma}{1 - \gamma}(\Delta^\pi_{\text{Jensen}}(s_0) - 1)\right]
\end{align}
where $h$ is an increasing function of $d_T^\pi$.
\textbf{}\end{restatable}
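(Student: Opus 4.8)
The plan is to unfold the state-visitation measure in \eqref{eqn:dt_W1}, reduce the claim to a per-trajectory computation, and then exploit a martingale hidden in the recursion \eqref{eqn:rec_time_d}. Writing $\rho_\pi(s|s_g) = \E_{s_0\sim\rho_0}[(1-\gamma)\sum_{t=0}^\infty \gamma^t P(s_t=s|\pi,s_g)]$ and substituting into \eqref{eqn:dt_W1}, nonnegativity of $d^\pi_T$ lets me interchange the sums (Tonelli) to obtain $W_1^\pi(\rho_\pi,\rho_g) = (1-\gamma)\,\E_{s_0\sim\rho_0}[F(s_0)]$, where $F(s_0) := \E[\sum_{t=0}^\infty \gamma^t d^\pi_T(s_t,s_g)\mid s_0]$ is taken over trajectories generated by $\pi$. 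Throughout I assume the policy is proper, i.e. $d^\pi_T(s_0,s_g)=\E[T\mid s_0]<\infty$ (otherwise both sides are $+\infty$ and the identity is trivial). Since the episode is absorbed at the goal with zero step cost thereafter, $d^\pi_T(s_t,s_g)=0$ for every $t\ge T$, so only the times before the hitting time $T:=T(s_g|\pi,s_0)$ contribute.

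The key step is to notice that \eqref{eqn:rec_time_d} says exactly that $N_t := d^\pi_T(s_t,s_g)+t$ is the Doob martingale of $T$: for $t<T$ the recursion gives $\E[d^\pi_T(s_{t+1},s_g)\mid\mathcal{F}_t]=d^\pi_T(s_t,s_g)-1$, hence $\E[N_{t+1}\mid\mathcal{F}_t]=N_t$, and since $N_T=T$ optional stopping yields $d^\pi_T(s_t,s_g)=\E[T\mid\mathcal{F}_t]-t$ on the event $\{t<T\}$ (here $\E[T]<\infty$ supplies the uniform integrability these manipulations require). Because $\{t<T\}$ is $\mathcal{F}_t$-measurable, the tower property turns each summand $\E[\gamma^t \E[T\mid\mathcal{F}_t]\,\1[t<T]]$ into $\E[\gamma^t T\,\1[t<T]]$, and collecting the remaining $-t$ terms collapses $F$ to the purely hitting-time expression $F(s_0)=\E[\sum_{t=0}^{T-1}\gamma^t (T-t)\mid s_0]$.

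It then remains to evaluate the inner finite sum in closed form. Using the geometric series and its derivative in $\gamma$, $\sum_{t=0}^{T-1}(T-t)\gamma^t = \frac{T(1-\gamma)-\gamma(1-\gamma^T)}{(1-\gamma)^2}$, which I would verify via $\sum_{t=0}^{T-1}(T-t)\gamma^t = T\frac{1-\gamma^T}{1-\gamma}-\gamma\frac{d}{d\gamma}\frac{1-\gamma^T}{1-\gamma}$. Taking the conditional expectation and substituting $\E[T\mid s_0]=d^\pi_T(s_0,s_g)$ together with $\E[\gamma^T\mid s_0]=V^\pi(s_0|s_g)$ gives $F(s_0)=\frac{d^\pi_T(s_0,s_g)}{1-\gamma}-\frac{\gamma(1-V^\pi(s_0|s_g))}{(1-\gamma)^2}$.

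Finally I multiply by $(1-\gamma)$ and substitute the Jensen-gap identity $V^\pi(s_0|s_g)=\Delta^\pi_{\text{Jensen}}(s_0)+\gamma^{d^\pi_T(s_0,s_g)}$, which separates the $\Delta^\pi_{\text{Jensen}}$ contribution from a term depending on $d^\pi_T$ alone; reading off the latter gives $h(d)=d+\frac{\gamma^{d+1}}{1-\gamma}$, so that $(1-\gamma)F(s_0)=h(d^\pi_T(s_0,s_g))+\frac{\gamma}{1-\gamma}(\Delta^\pi_{\text{Jensen}}(s_0)-1)$, and taking $\E_{s_0\sim\rho_0}$ is the claim. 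To justify that $h$ is increasing I check $h'(d)=1+\frac{\gamma^{d+1}\ln\gamma}{1-\gamma}\ge 0$, which (using $\gamma^{d+1}\le\gamma$ for $d\ge0$) reduces to the elementary inequality $-\gamma\ln\gamma\le 1-\gamma$ for $\gamma\in[0,1)$, true because $1-\gamma+\gamma\ln\gamma$ vanishes at $\gamma=1$ and is decreasing. I expect the main obstacles to be bookkeeping rather than depth: correctly handling the absorbing and goal states so the sum truncates at $T$, securing the integrability needed for the optional-stopping and tower manipulations, and spotting the martingale $N_t$ that makes the telescoping work.
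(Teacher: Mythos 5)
Your proof is correct and follows essentially the same route as the paper's: both unfold $\rho_\pi$ to reduce $W_1^\pi$ to $\E_{s_0}\bigl[(1-\gamma)\sum_{t=0}^{T-1}\gamma^t(T-t)\bigr]$, evaluate that sum in closed form as $T-\tfrac{\gamma}{1-\gamma}(1-\gamma^{T})$, and substitute the Jensen-gap identity to read off $h(\mu)=\mu+\tfrac{\gamma}{1-\gamma}\gamma^{\mu}$; your optional-stopping/martingale argument is simply a more rigorous rendering of the paper's one-line ``law of total expectation'' step $\E[d^\pi_T(s_t,s_g)]=\E[\max(T-t,0)]$. As a minor bonus, your monotonicity check for $h$, which uses $\gamma^{\mu+1}\le\gamma$ before invoking $\ln u\le u-1$, is tighter than the paper's Lemma~\ref{prop:analytical:support2}, whose intermediate claim $\log(\gamma)/(1-\gamma)>-1$ is false as stated (the conclusion still holds via your bound).
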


The first component in the above analytical expression shows that the Wasserstein distance depends on the expected number of steps, decreasing if the expected distance decreases.
The second component shows the risk-averse nature of the Wasserstein distance.
Concretely, the bounds for the Jensen inequality given by \citet{liao2018sharpening} imply that there are non-negative constants $C_1=C_1(d_T^\pi, \gamma)$ and $C_2=C_2(d_T^\pi, \gamma)$ depending only on the expected distance and discount factor such that
$$
C_1\mathrm{Var}(T(s_g|\pi, s)) \leq \Delta^\pi_{\text{Jensen}}(s) \leq C_2 \mathrm{Var}(T(s_g|\pi, s)).
$$
From the above, we can deduce that a policy with lower variance will have lower Wasserstein distance when compared to a policy with same expected distance from the start but higher variance.
The relation between the optimal policy in goal-conditioned RL and the Wasserstein distance can be made concrete if we consider deterministic dynamics.
\begin{restatable}{theorem}{aimsearch}
\label{thm:aim_search}
If the transition dynamics are deterministic, the policy that minimizes the Wasserstein distance over the time-step metrics in a goal-conditioned MDP (see \eqref{eqn:dt_W1}) is the optimal policy.
\end{restatable}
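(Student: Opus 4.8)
The plan is to begin from the analytical decomposition of the Wasserstein distance established in Proposition~\ref{prop:analytical} and to argue that, once the dynamics are deterministic, a single policy drives both of its non-constant terms to their minima at once. Rearranging that expression as
\[
W_1^\pi(\rho_\pi, \rho_g) = \mathop{\E}_{s_0 \sim \rho_0}\!\left[h(d^\pi_T(s_0, s_g))\right] + \frac{\gamma}{1-\gamma}\mathop{\E}_{s_0 \sim \rho_0}\!\left[\Delta^\pi_{\text{Jensen}}(s_0)\right] - \frac{\gamma}{1-\gamma},
\]
the final term is a constant independent of $\pi$, so minimizing $W_1^\pi$ over $\Pi$ reduces to minimizing the first two expectations jointly.

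Next I would record two monotonicity facts that hold state-by-state. First, since $h$ is increasing, the first expectation is smaller whenever $d^\pi_T(s_0, s_g)$ is smaller, and by construction the time-step-optimal policy $\pi^\blacklozenge$ (the solution of the Bellman optimality condition for minimal expected hitting time, \autoref{eqn:opt_dt}) satisfies $d^{\pi^\blacklozenge}_T(s_0, s_g) \le d^\pi_T(s_0, s_g)$ for every $s_0$ and every competing $\pi$. Second, the Jensen gap is non-negative — this is exactly the content of Proposition~\ref{prop:lower_b} — and it vanishes precisely when $\mathrm{Var}(T(s_g|\pi, s_0)) = 0$.

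The crux is to show that $\pi^\blacklozenge$ forces the second expectation to vanish, and this is where the deterministic-dynamics hypothesis does the work: a shortest-expected-time objective on a deterministic MDP admits a deterministic optimal policy, and a deterministic policy composed with deterministic transitions yields a deterministic hitting time $T(s_g|\pi^\blacklozenge, s_0)$, hence zero variance and $\Delta^{\pi^\blacklozenge}_{\text{Jensen}}(s_0) = 0$ for all $s_0$. Combining the two facts pointwise gives $h(d^{\pi^\blacklozenge}_T(s_0, s_g)) \le h(d^\pi_T(s_0, s_g))$ and $0 = \Delta^{\pi^\blacklozenge}_{\text{Jensen}}(s_0) \le \Delta^\pi_{\text{Jensen}}(s_0)$ at every $s_0$; integrating against $\rho_0$ yields $W_1^{\pi^\blacklozenge} \le W_1^\pi$ for all $\pi$, so $\pi^\blacklozenge$ is a Wasserstein minimizer. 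I would then invoke Proposition~\ref{prop:corr}, which gives $\pi^* = \pi^\blacklozenge$ under deterministic dynamics, to conclude that the minimizer of the Wasserstein distance is the optimal policy.

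I expect the main obstacle to be conceptual rather than computational: the ground metric $d^\pi_T$ is itself policy-dependent, so this is \emph{not} a standard optimal-transport minimization with a fixed cost, and one cannot naively claim that changing $\pi$ only reshapes the transported measure. The decomposition of Proposition~\ref{prop:analytical} is what tames this, but it still exposes a genuine tension between the expected-distance term and the variance (Jensen-gap) term — a risk/speed trade-off — that could in principle let a lower-variance yet longer policy undercut $\pi^\blacklozenge$. Determinism is exactly what dissolves this trade-off by pinning the variance term to zero at the time-step-optimal policy, so I would take care to isolate precisely where that hypothesis is used and to confirm that $\pi^\blacklozenge$ is realizable as a deterministic policy, which is what legitimizes the zero-variance conclusion.
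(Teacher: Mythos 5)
Your proposal is correct and follows essentially the same route as the paper: combine the decomposition of Proposition~\ref{prop:analytical} with the correspondence $\pi^* = \pi^\blacklozenge$ from Proposition~\ref{prop:corr}, using determinism to make the Jensen gap vanish at $\pi^\blacklozenge$. You are in fact slightly more careful than the paper's own (very terse) proof, since you explicitly invoke the nonnegativity of the Jensen gap to rule out a longer but lower-variance policy undercutting $\pi^\blacklozenge$ --- a step the paper leaves implicit.
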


\subsection{Adversarial Intrinsic Motivation to minimize Wasserstein-1 Distance} \label{sec:alg}

The above section makes it clear that minimizing the Wasserstein distance to the goal will lead to a policy that reaches the goal in as few steps as possible in expectation.
If the dynamics of the MDP are deterministic, this policy will also be optimal.
Note that the dual form (\autoref{eqn:KRdual}) can be used to estimate the distance, \emph{even if the ground metric $d^\pi_T$ is not known}.
The smoothness requirement on the potential function $f$ can be ensured with the constraint in \autoref{eqn:lip_mdp} on all states and subsequent transitions expected under the agent policy.

Now consider the full problem.
The reinforcement learning algorithm aims to learn a goal-conditioned policy with parameters $\theta \in \Theta$ whose state visitation distribution $\rho_\theta$ minimizes the Wasserstein distance to a goal-conditioned target distribution $\rho_g$ for a given goal $s_g \sim \sigma$.
\aim\ leverages the presence of the set of goals that the agent should be capable of reaching with a goal-conditioned potential function $f_{\phi}: \sset \times \gset \longmapsto \mathbb{R}$ with parameters $\phi \in \Phi$.
These objectives of the potential function and the agent can be expressed together using the following adversarial objective:
\begin{align} \label{eqn:pol_search}
    \min_{\theta \in \Theta} \max_{\phi \in \Phi} \mathop{\mathbb{E}}_{s_g \sim \sigma}\left[f_{\phi}(s_g,s_g) - \mathop{\mathbb{E}}_{s \sim \rho_{\theta}} [f_{\phi}(s, s_g)]\right]
\end{align}
where the potential function $f_\phi$ is $1$-Lipschitz over the state space.
Combining the objectives in Equations \ref{eqn:pol_search} and \ref{eqn:lip_mdp}, the loss for the potential function $f_\phi$ then becomes:
\begin{align} 
    L_f \defd &\mathop{\mathbb{E}}_{s_g \sim \sigma} \left[ - f_{\phi}(s_g, s_g) + \mathop{\mathbb{E}}_{s \sim \rho_{\theta}} [f_\phi(s, s_g) ]\right] +  \nonumber \\
    \lambda & \mathop{\mathbb{E}}_{(s, a, s', s_g) \sim \mathcal{D}} \left[(\max(|f_\phi(s, s_g) - f_\phi(s', s_g)| - 1, 0))^2 \right] \label{eqn:f_loss}
\end{align}

Where the distribution $\mathcal{D}$ should ideally contain all states in $\sset$, expected goals in $\gset$, and the transitions according to the agent policy $\pi_\theta$ and transition function $P$.
Such a distribution is difficult to obtain directly.
\aim\ approximates it with a small replay buffer of transitions from recent episodes experienced by the agent, and relabels these episodes with achieved goals (similar to \textsc{her} \citep{andrychowicz_hindsight_2018}).
Such an approximation does not respect the discounted measure of states later on in an episode, but is consistent with how other approaches in deep reinforcement learning tend to approximate the state visitation distribution, especially for policy gradient approaches \citep{nota2020PolicyGradient}.
While it does not include all states and all goals, we see empirically that the above approximation works well.

Now we turn to the reward function that should be presented to the agent such that maximizing the return will minimize the Wasserstein distance.
The Wasserstein discriminator is a potential function \citep{ng1999policy} (its value depends on the state).
It can thus be used to create a shaped reward $\hat{r}(s, a, s', s_g) = r(s, a, s'|s_g) + \gamma f_\phi(s', s_g) - f_\phi(s, s_g)$ without risk of changing the optimal policy.
Alternatively, we can explicitly minimize samples of the Wasserstein distance: $\hat{r}(s, a, s', s_g) = f_\phi(s', s_g) - f_\phi(s_g, s_g)$.
Finally, instead of the second term $f_\phi(s_g, s_g)$, we can just use a constant bias term.
In practice, all these choices work well, and the experiments use the latter (with $b = \max_{s \in \sset} f_\phi(s,s_g)$) to reduce variance in $\hat{r}$.
\begin{align} \label{eqn:reward}
    \hat{r}(s, a, s', s_g) = f_\phi(s', s_g) - b
\end{align}

The basic procedure to learn and use adversarial intrinsic motivation (\textsc{aim}) is laid out in Algorithm \ref{alg}, and also includes how to use this algorithm in conjunction with \textsc{her}.
If not using \textsc{her}, Line \ref{alg:her} where hindsight goals are added to the replay buffer can be skipped.

\section{Experiments} \label{sec:exp}

Our experiments evaluate the extent to which the reward learned through \textsc{aim} is useful as a proxy for the environment reward signal, or in tandem with the environment reward signal.
In particular, we ask the following questions:
\begin{itemize}[noitemsep,topsep=0pt,leftmargin=5.5mm]
    \item Does \aim\ speed up learning of a policy to get to a single goal compared to learning with a sparse reward?
    \item Does the learned reward function qualitatively guide the agent to the goal?
    \item Does \aim\ work well with stochastic transition dynamics or sharp changes in the state features?
    \item Does \aim\ generalize to a large set of goals and continuous state and action spaces?
\end{itemize}
Our experiments suggest that the answer to all 4 questions is ``yes'',
with the first three questions tested in the grid world presented in Figure \ref{fig:example} where the goal is within a room, and the agent has to go around the room from its start state to reach the goal.
Goal-conditioned tasks in the established Fetch robot domain show that \textsc{aim} also accelerates learning across multiple goals in continuous state and action spaces.

This section compares an agent learning with a reward learned through \aim\  with other intrinsic motivation signals that induce general exploration or shaped rewards that try to guide the agent to the goal.
The experiments show that \aim\ guides the agent's exploration more efficiently and effectively than a general exploration bonus, and adapts to the dynamics of the environment better than other techniques we compare to.
As an overview, the baselines we compare to are:
\begin{itemize}[noitemsep,topsep=0pt,leftmargin=5.5mm]
    \item \textbf{RND}: with random network distillation (\textsc{rnd}) \citep{burda2018rnd} used to provide a general exploration bonus.
    \item \textbf{MC}: with the distance between states learned through regression of Monte Carlo rollouts of the agent policy, similar to \citet{hartikainen2019ddl}.
    \item \textbf{SMiRL}: \textsc{sm}i\textsc{rl} \citep{berseth2019smirl} is used to provide a bonus intrinsic motivation reward that minimizes the overall surprise in an episode.
    \item \textbf{DiscoRL} The DiscoRL \citep{Nasiriany:EECS-2020-151} approach presents a reward to maximize the likelihood of a target distribution (normal distribution at the goal). In practice this approach is equivalent to a negative L2 distance to the goal, which we compare to in the grid world domain.
    \item \textbf{GAIL}: additional \textsc{gail} \citep{ho2016generative} rewards using trajectories relabeled with achieved goals considered as having come from the expert in hindsight. This baseline is compared to in the Fetch robot domain, since that is the domain where we utilize hindsight relabeling.
\end{itemize}

\paragraph{Grid World}
In this task, the goal is inside a room and the agent's starting position is such that it needs to navigate around the room to find the doorway and be able to reach the goal.
The agent can move in the $4$ cardinal directions unless blocked by a wall or the edge of the grid.
The agent policy is learned using soft Q-learning \citep{haarnoja2017reinforcement} with no hindsight goals used for this experiment.

The agent's state visitation distribution after just $100$ Q-function updates when using \aim-learned rewards is shown in Figure \ref{fig:grid_policy} and the learned rewards for each state are plotted in Figure \ref{fig:reward}.
The state visitation when learning with the true task reward shows that the agent is unable to learn a policy to the goal (Figure \ref{fig:task_policy}).
These figures show that \aim\ enables the agent to reach the goal and learn the required policy quickly, while learning with the sparse task reward fails to do so. %

\begin{figure}[b]
    \centering
    \begin{subfigure}[t]{.3\textwidth}
        \centering
        \includegraphics[width=\linewidth]{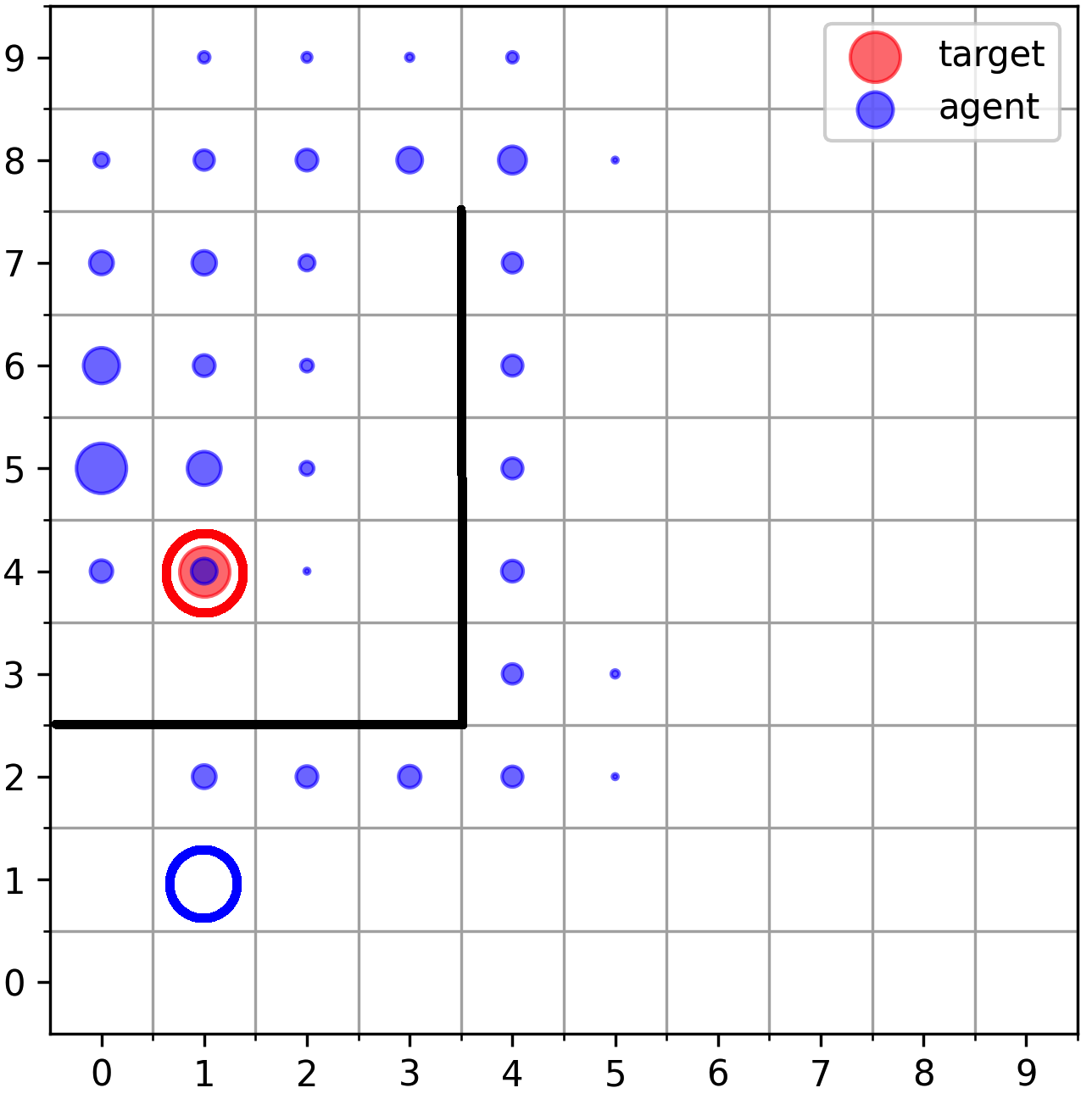}
        \caption{Agent state visitation learning with \textsc{aim} rewards (100 iterations)}
        \label{fig:grid_policy}
    \end{subfigure}%
    \hfill
    \begin{subfigure}[t]{.35\textwidth}
        \centering
        \includegraphics[width=\linewidth]{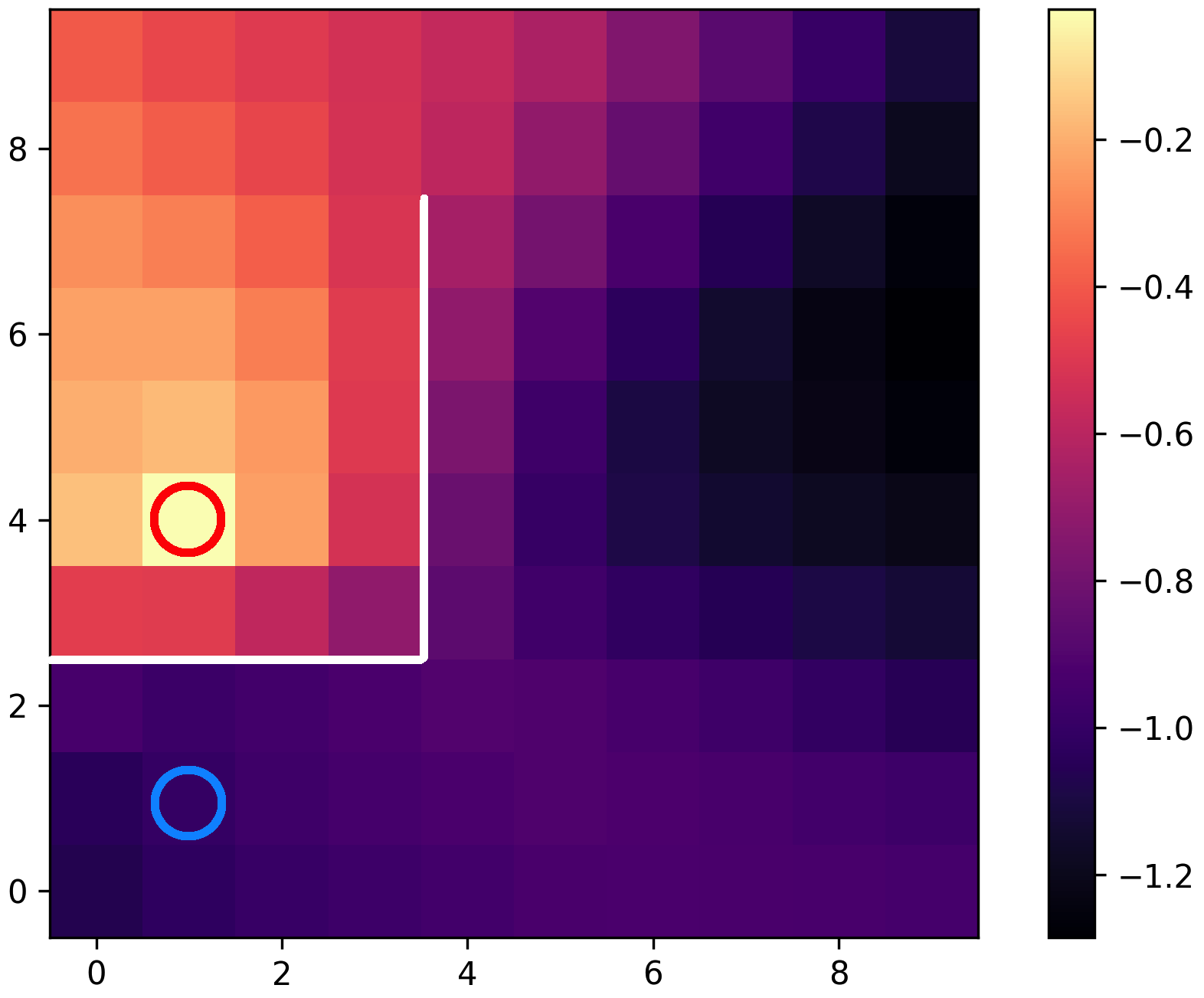}
        \caption{Learned Reward (100 training iterations)}
        \label{fig:reward}
    \end{subfigure}
    \hfill
    \begin{subfigure}[t]{.3\textwidth}
        \centering
        \includegraphics[width=\linewidth]{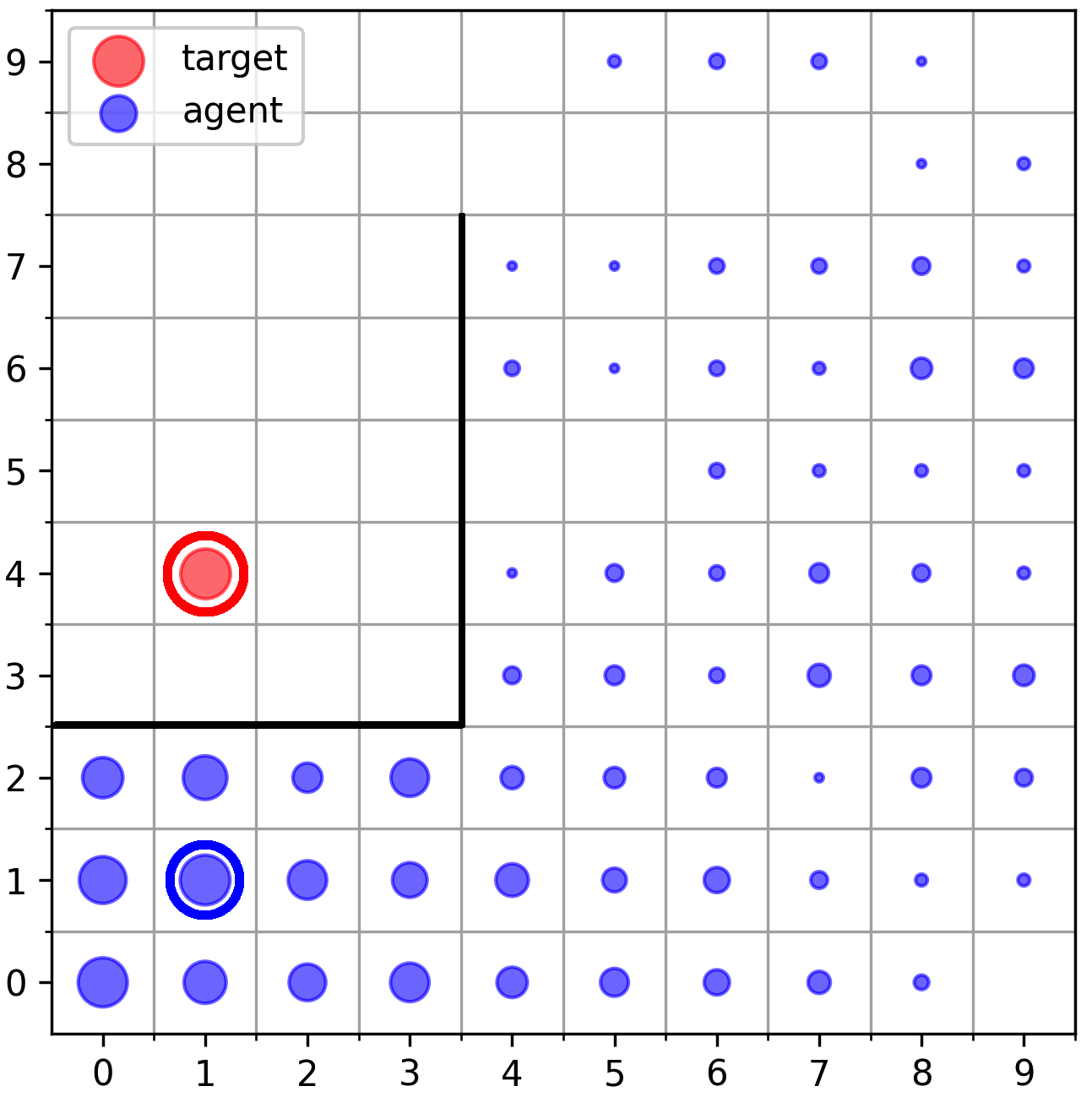}
        \caption{Agent state visitation learning with task reward (500 iterations)}
        \label{fig:task_policy}
    \end{subfigure}
    \caption{Grid world experiments. Agent's undiscounted state visitation (\ref{fig:grid_policy}, \ref{fig:task_policy}): Blue circle indicates agent's start state. Red circle is the goal. Blue bubbles indicate relative time agent's policy causes it to spend in respective states. Learned reward function (\ref{fig:reward}): \aim\ reward at each state of the grid world. Bold black (or white) lines indicate walls the agent cannot transition through.}
    \label{fig:gridworld}
\end{figure}

In Appendix \ref{app:grid} we also compare to the baselines described above and show that \aim\ learns a reward that is more efficient at directing the agent's exploration and more flexible to variations of the environment dynamics, such as stochastic dynamics or transitions that cause a sharp change in the state features.
None of the baselines compared to were able to direct the agent to the goal in this grid world even after given up to $5\times$ more interactions with the environment to train.
\aim's use of the time-step metric also enabled it to adapt to variations of the environment dynamics better than the gradient penalty based regularization used in Wasserstein GANs \citep{gulrajani_improved_2017} and adversarial imitation learning \citep{ghasemipour2020divergence} approaches.

\paragraph{Fetch Robot}
The generalization capability of \aim\ across multiple goals in goal-conditioned RL tasks with continuous states and actions is tested in the MuJoCo simulator \citep{todorov2012mujoco}, on the Fetch robot tasks from OpenAI gym \citep{brockman2016openai} which have been used to evaluate learning of goal-conditioned policies previously \citep{andrychowicz_hindsight_2018, zhang_automatic_2020}.
Descriptions of these tasks and their goal space is in Appendix \ref{app:fetch}.
We soften the Dirac target distribution for continuous states to instead be a Gaussian with variance of $0.01$ of the range of each feature.

The goals in this setting are not the full state, but rather the dimensions of factored states relevant to the given goal.
The task wrapper additionally returns the features of the agent's state in this reduced goal space, and so \aim\ can use it to learn our reward function, rather than the full state space.
It is unclear how this smaller goal space might affect \aim.
While the smaller goal space might make learning easier for potential function $f_\phi$, the partially observable nature of the goals might lead to a less informative reward.

We combine \aim\ with \textsc{her} (refer \autoref{sec:goal}) and refer to it as [\aim\ + \textsc{her}].
We compare this agent to the baselines we referred to above, as well as the sparse environment reward (\textsc{r} + \textsc{her}) and the dense reward derived from the negative Euclidean ($L2$) distance to the goal ($-L2$ + \textsc{her}).
The $L2$ distance is proportional to the number of steps it should take the agent to reach the goal in this environment, and so the reward based on it can act as an oracle reward that we can use to test how efficiently \aim\ can learn a reward function that helps the agent learn its policy.
We used the \textsc{her} implementation using Twin Delayed DDPG (\textsc{td3}) \citep{fujimoto2018TD3} as the underlying RL algorithm from the stable baselines repository \citep{stable-baselines}.
We did an extensive sweep of the hyperparameters for the baseline \textsc{her} + \textsc{r} (laid out in Appendix \ref{app:fetch}), with a coarser search on relevant hyperparameters for \aim.

\begin{figure}[t]

    \centering
    \begin{subfigure}{.48\textwidth}
    \includegraphics[width=\textwidth]{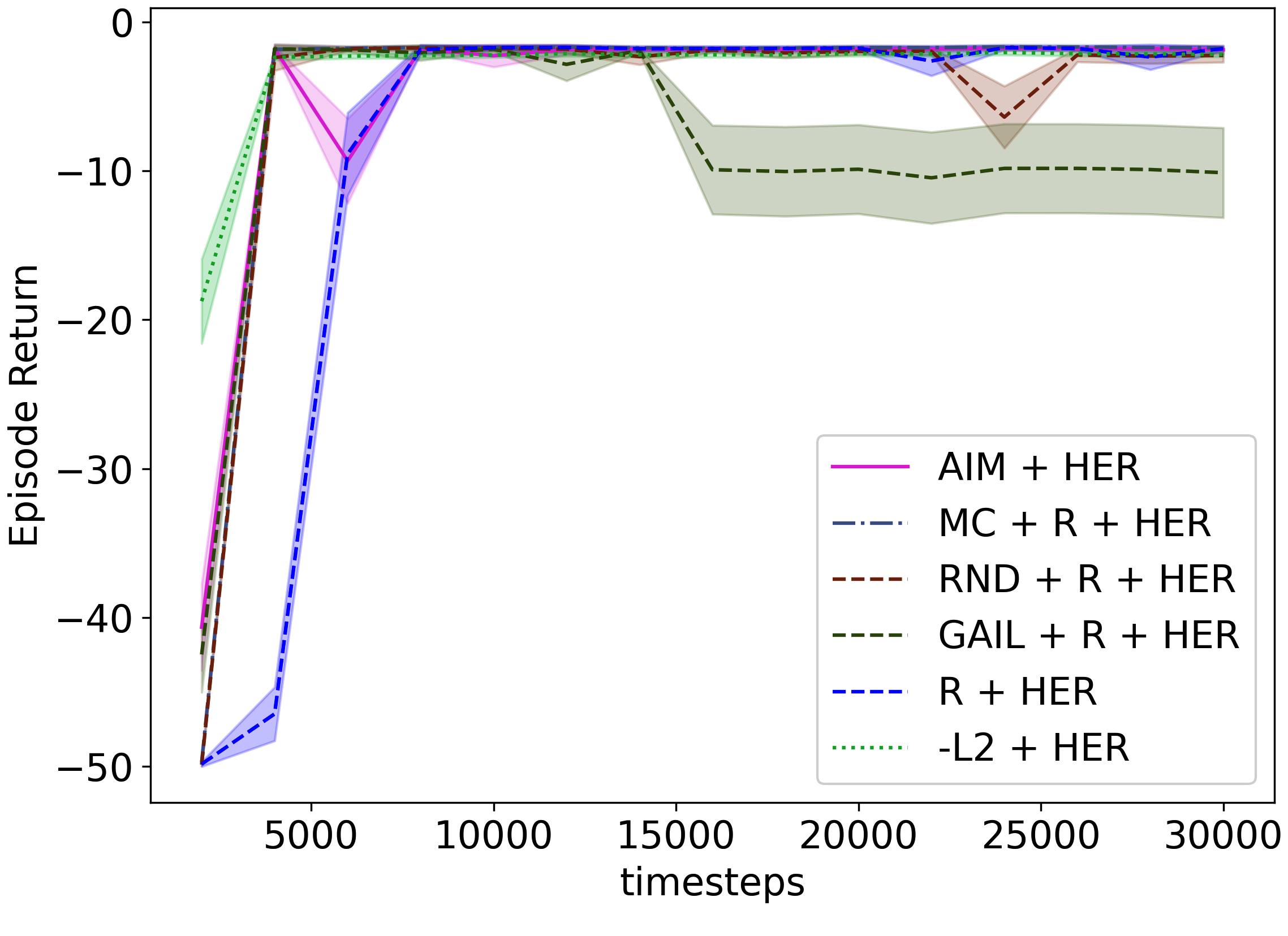} 
    \caption{Reach}
    \label{fig:reach}
    \end{subfigure}%
    \hfill
    \begin{subfigure}{.48\textwidth}
    \includegraphics[width=\textwidth]{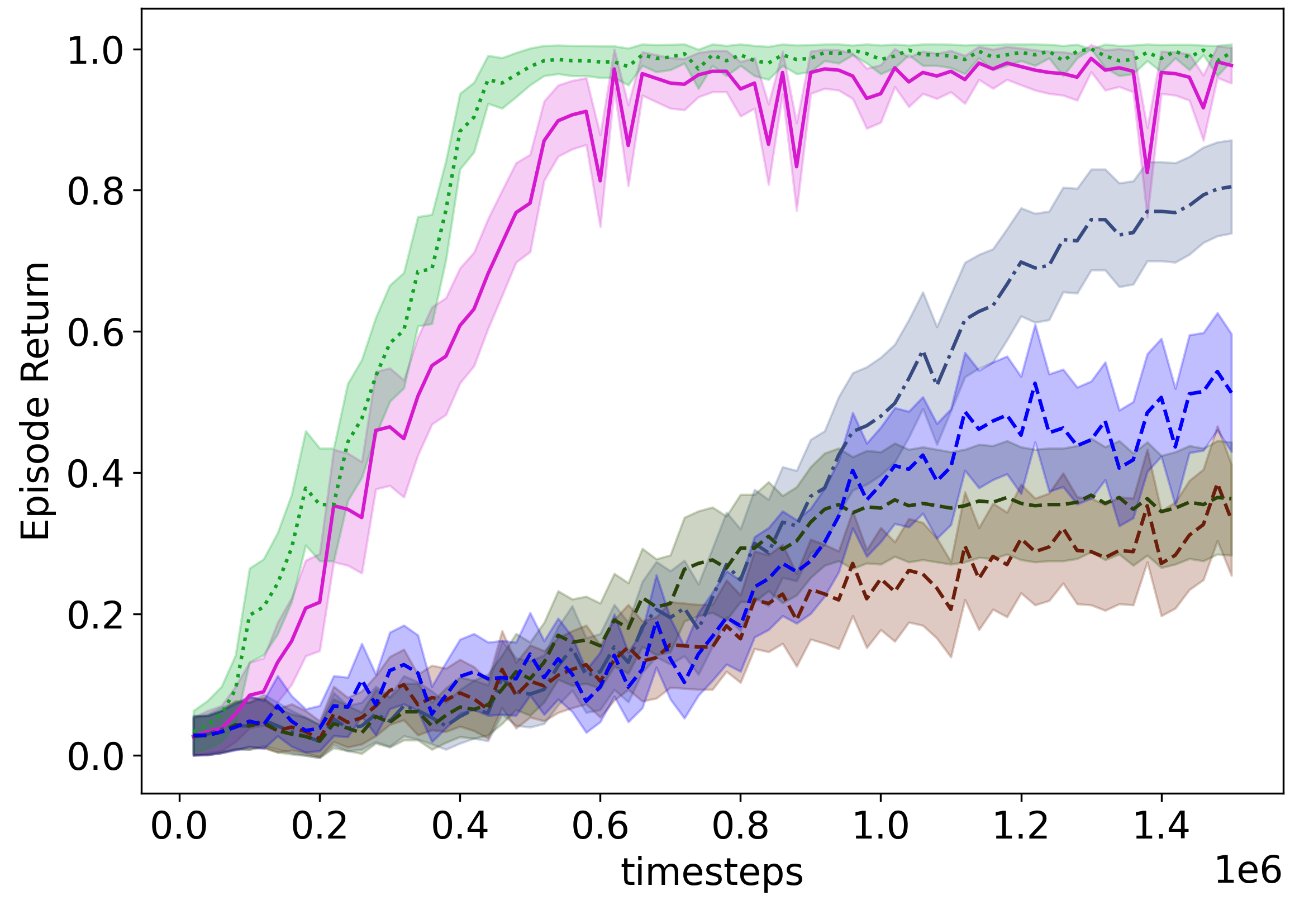}
    \caption{Pick and Place}
    \label{fig:pick}
    \end{subfigure}%
    \\
    \begin{subfigure}{.48\textwidth}
    \includegraphics[width=\textwidth]{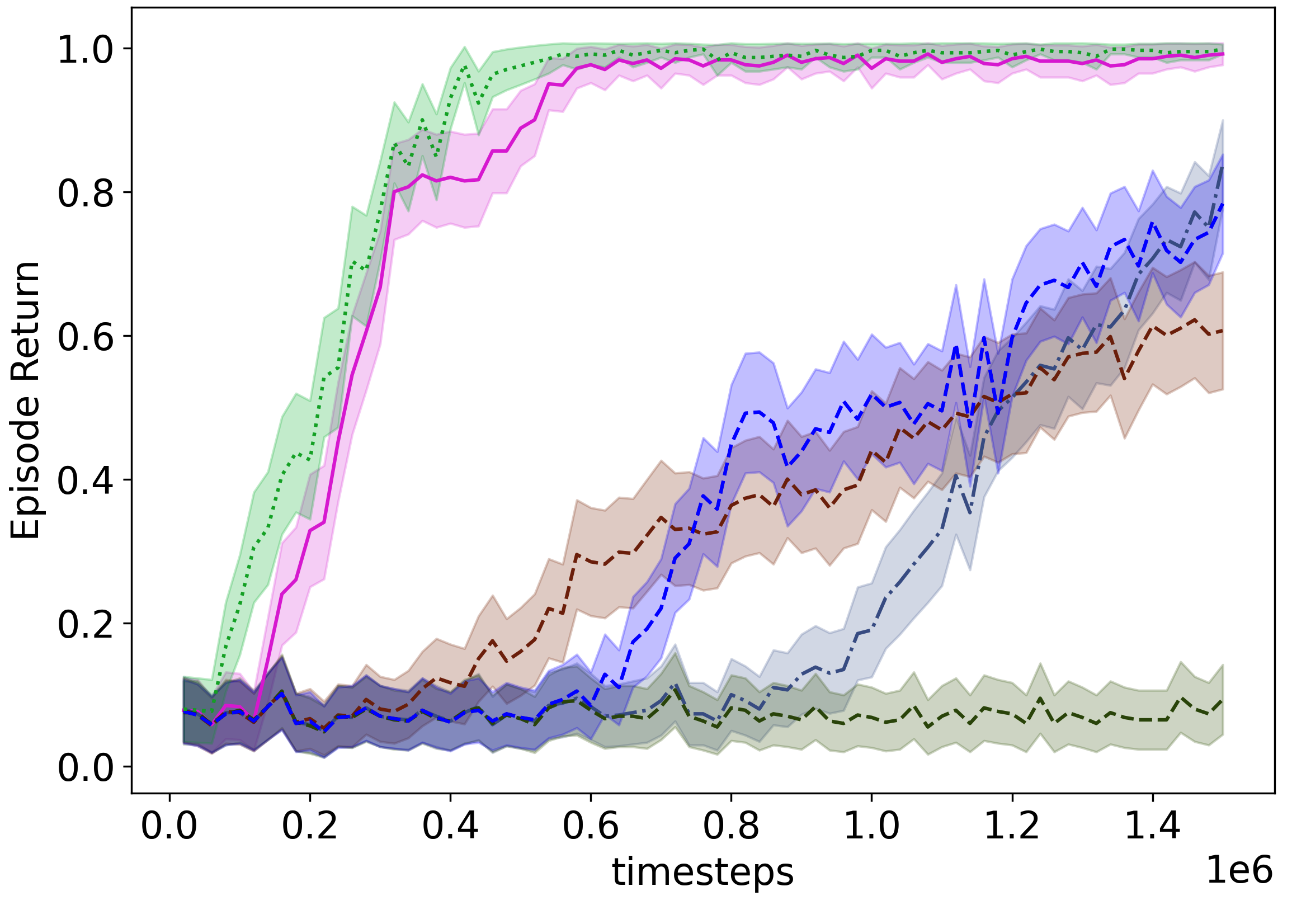}
    \caption{Push}
    \label{fig:push}
    \end{subfigure}%
    \hfill
    \begin{subfigure}{.48\textwidth}
    \includegraphics[width=\textwidth]{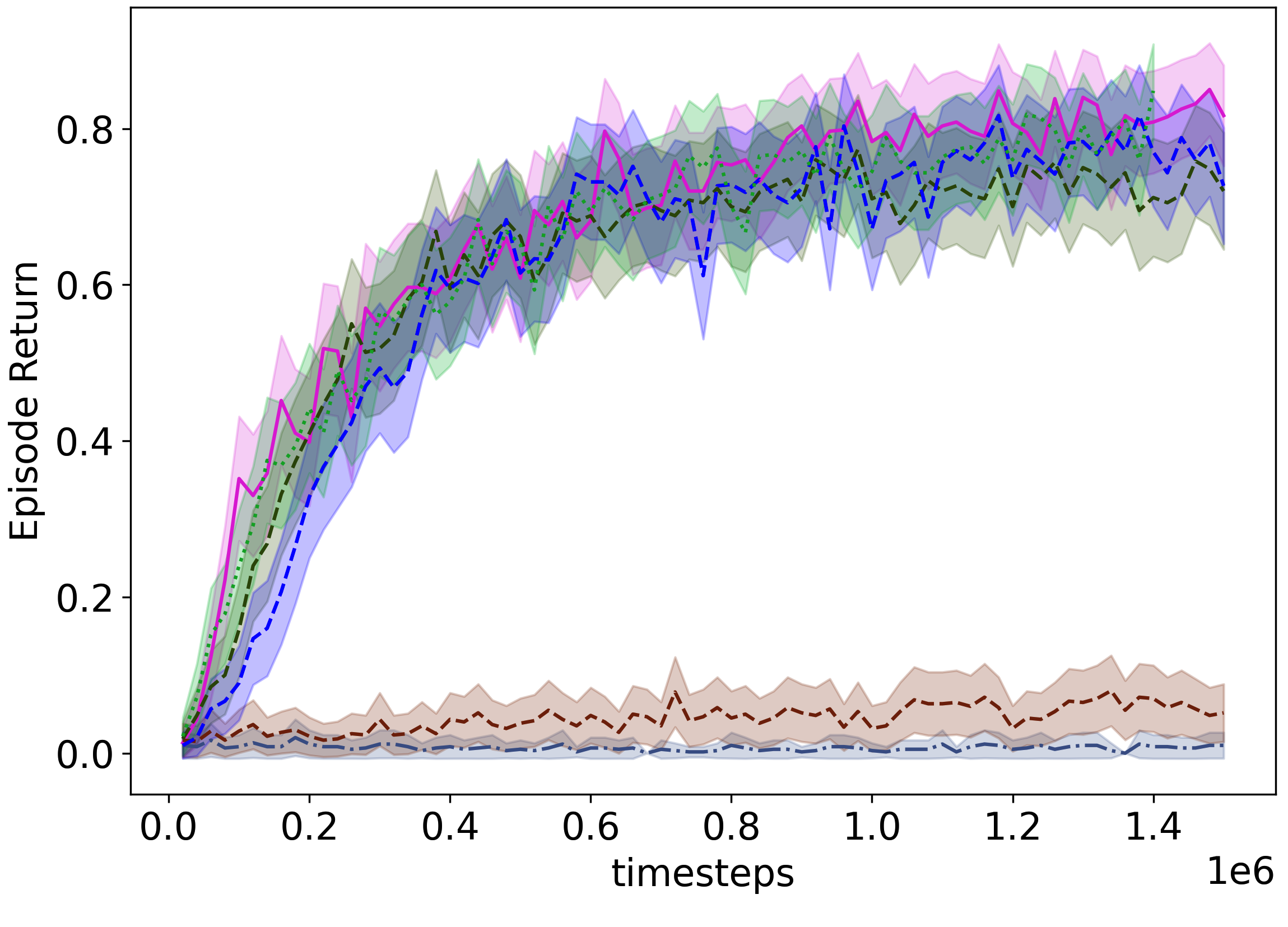}
    \caption{Slide}
    \label{fig:slide}
    \end{subfigure}
    \caption{Evaluating \textsc{aim} with \textsc{her} on some goal-conditioned RL tasks in the Fetch domain. \aim\ learns the reward function in tandem with the policy updates. The ``$-L2$'' reward is the true negative distance to goal, acting as a oracle reward in this domain. The other baselines are detailed above.}
    \label{fig:fetch}
\end{figure}

Figure \ref{fig:fetch} shows that using the \aim-learned reward speeds up learning in three of the four Fetch domains, even without the environment reward.
This improvement is very close to what we would see if we used the dense reward (based on the actual distance function in this domain).
An additional comparison with an agent learning with both the \aim-learned reward as well as the task reward (\aim\ + \textsc{r} + \textsc{her}) can be seen in \autoref{fig:app_fetch} in the Appendix, showing that using both signals accelerates learning even more.
These results also highlight that \aim\ continues to work in continuous state and action spaces, even though our analysis focuses on discrete states and actions.
Results are averaged across the 6 different seeds, with shaded regions showing standard error across runs.
Statistical analysis using a mixed effects ANOVA and a Tukey test at a significance level of $95\%$ (more detail in \autoref{app:fetch_test}) show that  in three of the four environments \aim\ and \aim + \textsc{r} have similar odds of reaching the goal as the dense shaped reward, and in all four environments \aim\ and \aim + \textsc{r} have higher odds of reaching the goal compared to the sparse reward.

The other baselines compare well to \aim\ in the Fetch Reach domain (Figure \ref{fig:reach}), but do not do as well on the other problems.
In fact, none of the other baselines outperform the vanilla baseline [\textsc{r} + \textsc{her}] in all the domains.
The \textsc{rnd} rewards help the agent to start learning faster in the Push domain (Figure \ref{fig:push}), but lead to worse performance in Pick and Place (Figure \ref{fig:pick}).
On the other hand, learning the distance function through \textsc{mc} regression helps in the Pick and Place domain, but slows down learning when dealing with Push.
Most notably, both these approaches cause learning to fail in the Slide domain (Figure \ref{fig:slide}), where the credit assignment problem is especially difficult.
\textsc{gail} works as well as \aim\ and the vanilla baseline in Slide, but underperforms in the other domains.
We hypothesize that the additional rewards in these baselines conflict with the task reward.
Additionally, none of the three new baselines work well if we do not provide the task reward in addition to the specific bonus for that algorithm. 

We did not find any configuration in the Fetch Reach domain where [\textsc{sm}i\textsc{rl} + \textsc{r} + \textsc{her}] was able to accomplish the task in the given training budget.
Since SMiRL did not work on the grid world or Fetch Reach, we did not try it out on any of the other domains.

\textsc{fairl} \citep{ghasemipour2020divergence} (which has been shown to learn policies that cover hand-specified state distributions) was also applied on these $4$ domains but it failed to learn at all.
Interestingly, scaling the reward such that it is always negative led to similar performance to (but not better than) \aim.
We hypothesize that \textsc{fairl}, as defined and presented, fails in these domains because the environments are episodic, and the episode ends earlier if the goal is reached.
Since the \textsc{fairl} reward is positive closer to the target distribution, the agent can get close to the target, but refrain from reaching it (and ending the episode) to collect additional positive reward.

The domain where \aim\ does not seem to have a large advantage (Slide) is one where the agent strikes an object initially and that object has to come to rest near the goal.
In fact, \aim-learned rewards, the vanilla environment reward R, and the oracle $-L2$ rewards all lead to similar learning behavior, indicating that this particular task does not benefit much from shaped rewards.
The reason for this invariance might be that credit assignment has to propagate back to the single point when the agent strikes the object regardless of how dense the subsequent reward is.

\section{Discussion and Future Work} \label{sec:disc}

Approaches for estimating the Wasserstein distance to a target distribution by considering the dual of the Kantorovich relaxation have been previously proposed \citep{arjovsky_wasserstein_2017,gulrajani_improved_2017,xiao_wasserstein_2019}, but assume that the ground metric is the $L2$ distance.
We improve upon them by choosing a metric space more suited to the MDP and notions of optimality in the MDP.
This choice allows us to leverage the structure introduced by the dynamics of the MDP to regularize the Kantorovich potential using a novel objective.

Previous work \cite{Bellemare2017CramerD} has pointed out that the gradients from sample estimates of the Wasserstein distance might be biased.
This issue is mitigated in our implementation through multiple updates of the discriminator, which they found to be empirically useful in reducing the bias.
Additionally, recent work has pointed out that the discriminator in WGAN might be bad at estimating the Wasserstein distance \citep{Stanczuk2021WassersteinGW}.
While our experiments indicate that the potential function in \aim\ is learned appropriately, future work could look more deeply to verify possible inefficiencies in this estimation.

The process of learning the Wasserstein distance through samples of the environment while simultaneously estimating the cost of the full path is reminiscent of the $A^*$ algorithm \citep{hart1968formal}, where the optimistic heuristic encourages the agent to explore in a directed manner, and adjusts its estimates based on these explorations.

The discriminator objective (Equation \ref{eqn:f_loss}) also bears some resemblance to a linear program formulation of the RL problem \citep{puterman1990markov}.
The difference is that this formulation minimizes the value function on states visited by the agent, while \aim\ additionally maximizes the potential at the goal state.
This crucial difference has two main consequences.
First, the potential function during learning is not equivalent to the value of the agent’s policy (verified by using this potential as a critic).
Second, increasing the potential of the goal state in \aim\ directs the agent exploration in a particular direction (namely, the direction of sharpest increase in potential).

In the goal-conditioned RL setting, \aim\ seems to be an effective intrinsic reward that balances exploration and exploitation for the task at hand.
The next step is to consider whether the Wasserstein distance can be estimated similarly for more general tasks, and whether minimizing this distance in those tasks leads to the optimal policy.
A different potential avenue for future work is the problem of more general exploration \citep{hazan2019provably,lee2019efficient} by specifying a uniform distribution as the target, or using this directed exploration as an intermediate step for efficient exploration \citep{Jinnai2020Exploration}.

Finally, reward design is an important aspect of practical reinforcement learning.
Not only do properly shaped reward speed up learning \citep{ng1999policy}, but reward design can also subtly influence the kinds of behaviors deemed acceptable for the RL agent \citep{knox2021reward} and could be a potential safety issue keeping reinforcement learning from being deployed on real world problems.
Learning-based approaches that can assist in specifying reward functions safely given alternative approaches for communicating the task could be of value in such a process of reward design, and an avenue for future research.
\section*{Acknowledgements and Funding Information}

We thank Caroline Wang, Garrett Warnell, and Elad Liebman for discussion and feedback on this work.
We also thank the reviewers for their thoughtful comments and suggestions that have helped to improve this paper.

This work has taken place in part in the Learning Agents Research
Group (LARG) at the Artificial Intelligence Laboratory, and in part in the Personal Autonomous Robotics Lab (PeARL) at The University
of Texas at Austin.
LARG research is supported in part by the
National Science Foundation (CPS-1739964, IIS-1724157, FAIN-2019844),
the Office of Naval Research (N00014-18-2243), Army Research Office
(W911NF-19-2-0333), DARPA, Lockheed Martin, General Motors, Bosch, and
Good Systems, a research grand challenge at the University of Texas at
Austin.
PeARL research is supported in part by the NSF (IIS-1724157, IIS-1638107, IIS-1749204, IIS-1925082), ONR (N00014-18-2243), AFOSR (FA9550-20-1-0077), and ARO (78372-CS). This research was also sponsored by the Army Research Office under Cooperative Agreement Number W911NF-19-2-0333. 
The views and conclusions contained in this document are
those of the authors and should not be interpreted as representing the official policies, either expressed or implied, of the Army Research Office or the U.S. Government.
The U.S. Government is authorized to reproduce and distribute reprints for Government purposes notwithstanding any copyright notation herein.
Peter Stone serves as the Executive
Director of Sony AI America and receives financial compensation for
this work. The terms of this arrangement have been reviewed and
approved by the University of Texas at Austin in accordance with its
policy on objectivity in research.

\bibliographystyle{plainnat}
\bibliography{main}
\clearpage

\clearpage
\appendix

\section{Metrics and Quasimetrics} \label{sec:metrics}

A metric space ($\mathcal{M}, d$) is composed of a set $\mathcal{M}$ and a metric $d: \mathcal{M} \times \mathcal{M} \longmapsto \mathbb{R}^+ \cup \{\infty\}$ that compares two points in that set.
Here $\mathbb{R}^+$ is the set of non-negative real numbers.

\begin{definition}
A metric $d: \mathcal{M} \times \mathcal{M} \longmapsto \mathbb{R}^+ \cup \{\infty\}$ compares two points in set $\mathcal{M}$ and satisfies the following axioms $\forall m_1, m_2, m_3 \in \mathcal{M}$:
\begin{itemize}
    \item $d(m_1, m_2) = 0 \iff m_1 = m_2$ (identity of indiscernibles)
    \item $d(m_1, m_2) = d(m_2, m_1)$ (symmetry)
    \item $d(m_1, m_2) \leq d(m_1, m_3) + d(m_3, m_2)$ (triangle inequality)
\end{itemize}
\end{definition}

A variation on metrics that is important to this paper is \emph{quasimetrics}.

\begin{definition}
A quasimetric \citep{Smyth1987QuasiUR} is a function that satisfies all the properties of a metric, with the exception of symmetry $d(m_1, m_2) \neq d(m_2, m_1)$.
\end{definition}

As an example, consider an MDP where the actions and transition dynamics allow an agent to navigate from any state to any other state.
Let $T(s_2| \pi, s_1)$ be the random variable for the first time-step that state $s_2$ is encountered by the agent after starting in state $s_1$ and following policy $\pi$.
The time-step metric $d^\pi_T$ for this MDP can then be defined as

\begin{align*}
    d^\pi_T(s_1, s_2) \defd \mathbb{E}\; \left[T(s_2 |\pi, s_1)\right]
\end{align*}

$d^\pi_T$ is a quasimetric, since the action space and transition function need not be symmetric, meaning the expected minimum time needed to go from $s_1$ to $s_2$ need not be the same as the expected minimum time needed to from $s_2$ to $s_1$.
The diameter of an MDP \citep{jaksch_near-optimal_2010,kearns2002near} is generally calculated by taking the maximum time-step distance between over all pairs of states in the MDP either under a random policy or a policy that travels from any state to any other state in as few steps as possible.

\section{Optimal Transport and Wasserstein-1 Distance}
\label{app:opt}

The theory of optimal transport \citep{villani2008optimal,bousquet_optimal_2017} considers the question of how much work must be done to transport one distribution to another optimally.
More concretely, suppose we have a metric space ($\mathcal{M}$, $d$) where $\mathcal{M}$ is a set and $d$ is a metric on $\mathcal{M}$.
See the definitions of metrics and quasimetrics in Appendix \ref{sec:metrics}.
For two distributions $\mu$ and $\nu$ with finite moments on the set $\mathcal{M}$, the Wasserstein-$p$ distance is denoted by: 

\begin{align} \label{appeqn:w-p}
    W_p(\mu, \nu) \defd \inf_{\zeta \in Z(\mu, \nu)} \E_{(X,Y)\sim \zeta} \left[d(X, Y)^p\right]^{1/p}
\end{align}

where $Z$ is the space of all possible couplings between $\mu$ and $\nu$.
Put another way, $Z$ is the space of all possible distributions $\zeta \in \Delta(\mathcal{M}\times\mathcal{M})$ whose marginals are $\mu$ and $\nu$ respectively.
Finding this optimal coupling tells us what is the least amount of work, as measured by $d$, that needs to be done to convert $\mu$ to $\nu$.
This Wasserstein-$p$ distance can then be used as a cost function (negative reward) by an RL agent to match a given target distribution \citep{xiao_wasserstein_2019,dadashi_primal_2020}.

Finding the ideal coupling (meaning finding the optimal transport plan from one distribution to the other) which gives us an accurate distance is generally considered intractable.
However, if what we need is an accurate estimate of the Wasserstein distance and not the optimal transport plan (as is the case when we mean to use this distance as part of our intrinsic reward) we can turn our attention to the dual form of this distance.
The Kantorovich-Rubinstein duality \citep{villani2008optimal} for the Wasserstein-1 distance on a ground metric $d$ is of particular interest and gives us the following equality:

\begin{align} \label{appeqn:KRdual}
    W_1(\mu, \nu) = \sup_{\text{Lip}(f) \leq 1} \mathbb{E}_{y \sim \nu}\left[f(y)\right] - \mathbb{E}_{x \sim \mu} \left[f(x)\right]
\end{align}

where the supremum is over all $1$-Lipschitz functions $f: \mathcal{M} \longmapsto \mathbb{R}$ in the metric space, and the Lipschitz constant of a function $f$ is defined as:
\begin{align} \label{eqn:lip}
    \text{Lip}(f) &\defd \sup \left\{ \frac{|f(y) - f(x)|}{d(x, y)} \forall (x, y) \in \mathcal{M}^2, x \neq y \right\}
\end{align}

That is, the Lipschitz condition of this function $f$ (called the Kantorovich potential function) is measured according to the metric $d$.
Recently, \citet{jevtic2018combinatorial} has shown that this dual formulation where the constraint on the potential function is a smoothness constraint extends to quasimetric spaces as well.
If defined over a quasimetric space, the Wasserstein distance also has properties of a quasimetric (specifically, the distances are not necessarily symmetric).

If the given metric space is a Euclidean space ($d(x, y) = \|y - x\|_2$), the Lipschitz bound in Equation \ref{eqn:KRdual} can be computed locally as a uniform bound on the gradient of $f$.

\begin{align} \label{eqn:KRdual_l2}
    W_1(\mu, \nu) = \sup_{\|\nabla f\| \leq 1} \mathbb{E}_{y \sim \nu}\left[f(y)\right] - \mathbb{E}_{x \sim \mu} \left[f(x)\right]
\end{align}

meaning that $f$ is the solution to an optimization objective with the restriction that $\|\nabla f(x)\| \leq 1$ for all $x \in \mathcal{M}$.
This strong bound on the dual in Euclidean space is the one that has been used most in recent implementations of the Wasserstein generative adversarial network \citep{arjovsky_wasserstein_2017,gulrajani_improved_2017} to regularize the learning of the discriminator function.
Such regularization has been found to be effective for stability in other adversarial learning approaches such as adversarial imitation learning \citep{ghasemipour2020divergence}.

Practically, the Kantorovich potential function $f$ can be approximated using samples from the two distributions $\mu$ and $\nu$, regularization of the potential function to ensure smoothness, and an expressive function approximator such as a neural network.
A more in depth treatment of the Kantorovich relaxation and the Kantorovich-Rubinstein duality, as well as their application in metric and Euclidean spaces using the Wasserstein-1 distance we lay out above, is provided by \citet{peyre_computational_2020}.

Now consider the problem of goal-conditioned reinforcement learning.
Here the target distribution $\nu$ is the goal-conditioned target distribution $\rho_g$ which is a Dirac at the given goal state.
Similarly, the distribution to be transported $\mu$ is the agent's goal-conditioned state distribution $\rho_\pi$.

The Wasserstein-1 distance of an agent executing policy $\pi$ to the goal $s_g$ can be expressed in a fairly straightforward manner as:
\begin{align} \label{eqn:goal_W1}
    W_1(\rho_\pi, \rho_g) = \sum_{s \in \sset} \rho_\pi(s | s_g) d(s, s_g)
\end{align}

The above is a simplification of Equation \ref{eqn:w-p}, where $p=1$ and the joint distribution is easy to specify since the target distribution $\rho_g$ is a Dirac distribution.

\section{Lipschitz constant of Potential function} \label{app:quasi_lip}

For a given goal $s_g$ and all states $s_0 \in \sset$, recall that function $f$ is $L$-Lipschitz if it follows the Lipschitz condition as follows.

\begin{align}
    \lvert f(s_g) - f(s_0)\rvert \leq L d^\pi_T(s_0, s_g)\; \forall s_0 \in \sset
\end{align}

\begin{restatable}{proposition}{lipschitz_cond}
\label{prop:lipschitz_cond}
If transitions from the agent policy $\pi$ are guaranteed to arrive at the goal in finite time and $f$ is $L$-bounded in expected transitions, i.e.,
\begin{align*}
\sup_{s\in S} \mathop{\E}_{s'\sim \pi, P} \left[\lvert f(s') - f(s)\rvert \right] \leq L,
\end{align*}
then $f$ is $L$-Lipschitz.
\end{restatable}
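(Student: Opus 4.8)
The plan is to establish the Lipschitz bound $|f(s_g) - f(s_0)| \le L\, d^\pi_T(s_0, s_g)$ for every $s_0 \in \sset$ by a telescoping argument along trajectories generated by $\pi$, combined with a Wald-type identity for the random hitting time. First I would fix the goal $s_g$ and a start state $s_0$, and consider the trajectory $s_0, s_1, s_2, \dots$ obtained by following $\pi$, writing $T := T(s_g \mid \pi, s_0)$ for the first time the goal is reached, so that $s_T = s_g$ on every path. Along any such trajectory the increments of $f$ telescope, $f(s_g) - f(s_0) = \sum_{t=0}^{T-1}\big(f(s_{t+1}) - f(s_t)\big)$, and this identity holds path-by-path because $s_T = s_g$ almost surely. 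Since the left-hand side is a constant (both endpoints are fixed), the random sum equals that constant almost surely, so I may take expectations over trajectories and then apply $|\E[X]| \le \E[|X|]$ followed by the triangle inequality to obtain $|f(s_g) - f(s_0)| \le \E\big[\sum_{t=0}^{T-1} |f(s_{t+1}) - f(s_t)|\big]$.

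The central step is then to rewrite the stopped sum with indicators, $\sum_{t=0}^{T-1}(\cdot) = \sum_{t=0}^{\infty} \mathbb{I}[T > t](\cdot)$, and exchange the sum with the expectation, which is justified by Tonelli since every term is non-negative. The crucial observation is that the event $\{T > t\}$ depends only on $s_0, \dots, s_t$, hence is measurable with respect to the natural filtration $\mathcal{F}_t$. Conditioning on $\mathcal{F}_t$ and using the tower property, each term becomes $\E[\mathbb{I}[T>t]\,\E[|f(s_{t+1})-f(s_t)| \mid \mathcal{F}_t]]$, and the inner conditional expectation is exactly the expected one-step jump $\E_{s' \sim \pi, P}[|f(s') - f(s_t)|]$, which is at most $L$ by hypothesis. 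Each term is therefore bounded by $L\, P(T > t)$.

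Summing over $t$ and invoking the standard identity $\E[T] = \sum_{t=0}^{\infty} P(T > t)$ for a non-negative integer-valued random variable yields $|f(s_g) - f(s_0)| \le L\,\E[T] = L\, d^\pi_T(s_0, s_g)$, which is precisely the $L$-Lipschitz condition; the finite-hitting-time assumption guarantees $\E[T] = d^\pi_T(s_0, s_g) < \infty$ so the bound is non-vacuous. A secondary point to verify is that the absorbing-state convention causes no trouble: because the per-step cost of the time-step metric is $0$ once the goal is reached, the increments beyond time $T$ never enter the sum, and $\E[T]$ matches $d^\pi_T(s_0, s_g)$ exactly via Definition \ref{def:tmetric}.

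I expect the main obstacle to be the careful treatment of the random stopping time $T$: one must justify the interchange of the infinite sum and the expectation and, more importantly, argue that the per-step bound $L$ can be pulled out even though the summation range $T$ is itself random and correlated with the trajectory. The indicator reformulation together with the $\mathcal{F}_t$-measurability of $\{T > t\}$ is exactly what makes this rigorous, converting the naive "a sum of at most $T$ terms, each bounded by $L$" heuristic into a valid Wald-type inequality.
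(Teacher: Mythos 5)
Your proof is correct and rests on the same core decomposition as the paper's: write $f(s_g)-f(s_0)$ as a telescoping sum of one-step increments along the trajectory (valid almost surely because the hitting time is finite), apply the triangle inequality, bound each expected increment by $L$, and conclude with $L\,\E[T]=L\,d^\pi_T(s_0,s_g)$. The one place you genuinely diverge is the treatment of the random upper limit $T$: the paper conditions on the value of $T(s_0)$ via the tower property and bounds each of the $T(s_0)$ terms by $L$ inside that conditional expectation, whereas you decompose the stopped sum as $\sum_{t\ge 0}\mathbb{I}[T>t]\,|f(s_{t+1})-f(s_t)|$, invoke Tonelli, and condition on the natural filtration $\mathcal{F}_t$, using that $\{T>t\}$ is $\mathcal{F}_t$-measurable to reduce each term to $L\,P(T>t)$ and then sum via $\E[T]=\sum_t P(T>t)$. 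Your route is the more careful one: the hypothesis bounds the one-step expectation under the transition kernel, and conditioning on the \emph{future} event $\{T(s_0)=n\}$ (as the paper does) tilts the conditional law of $s_{t+1}$ given $s_t$, so the per-step bound $L$ does not transfer term-by-term under that conditioning without further argument; conditioning only on the past, as you do, is exactly what makes the Wald-type inequality go through rigorously. The conclusion and the final bound are identical in both versions.
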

\begin{proof}
Since $f(s_g) - f(s_0)$ is a scalar quantity, we may write $f(s_g) - f(s_0) = \E_{\pi, P}[f(s_g) - f(s_0)]$. Using this fact and that $P(T(s_0) < \infty)=1$ where $T(s_0)=T^\pi(s_g | \pi, s_0)$ for notation simplicity, the LHS of the expression above becomes a telescopic sum
\begin{align*}
\lvert f(s_g) - f(s_0)\rvert &= \mathop{\E}_{\pi, P} \left[f(s_g) - f(s_0)\right] \\
    &= \mathop{\E}_{\pi, P} \left[\left\lvert \sum_{t=0}^{T(s_0) - 1} (f(s_{t+1}) - f(s_t))\right\rvert \right].
    \\
    &\leq \mathop{\E}_{\pi, P} \left[ \sum_{t=0}^{T(s_0) - 1} \lvert f(s_{t+1}) - f(s_t)\rvert \right]. \\
\end{align*}
Now let us assume that for all transitions $(s, a, s')$, $\E[\lvert f(s') - f(s)\rvert] \leq L$. Then
\begin{align*}
   \mathop{\E}_{\pi, P} \left[ \sum_{t=0}^{T(s_0) - 1} \vert f(s_{t+1}) - f(s_t)\rvert\right]&= \mathop{\E}_{T(s_0)}\left[\mathop{\E}_{\pi, P} \left[ \sum_{t=0}^{T(s_0) - 1} \lvert f(s_{t+1}) - f(s_t)\rvert \Big\vert T(s_0) \right] \right] \\
    &\leq \mathop{E}_{T(s_0)} \left[ \sum_{t=0}^{T(s_0) - 1} L \right] \\
    &= L\mathop{\E}_{T(s_0)}\left[ T(s_0) \right] \\
    &= L d_T^\pi(s_0, s_g),
\end{align*}
showing that $|f(s_g) - f(s_0)| \leq L d_T^\pi(s_0, s_g)$ as desired. 
\end{proof}

\section{Proofs of Claims} \label{app:proofs}

The Bellman optimality condition gives us the following optimal distance to goal:
\begin{align} \label{eqn:opt_dt}
    d^{\blacklozenge}_T(s, s_g) = \begin{cases} 0 & \text{if } s = s_g \\
    1 + \min_{a \in \aset} \sum_{s' \in \sset} P(s'|s, a, s_g) d^{\blacklozenge}_T(s', s_g) & \text{otherwise}\end{cases}
\end{align}

\lowerbound*
\begin{proof}
\begin{align*}
    V^\pi(s|s_g) &= \mathbb{E} \left[ \gamma^{T(s_g|\pi, s)} \right] \geq \gamma^{d^\pi_T(s, s_g)} \quad \forall \ s \in \sset
\end{align*}
where the inequality follows as a consequence of Jensen's inequality and the convex nature of the value function.
\end{proof}

\policyCorrespondence*

\begin{proof}
Consider the value of a state $s$ given goal $s_g$.
If the transitions are deterministic and the agent policy $\pi$ is deterministic (as is the case for the optimal policy), then the time to reach the goal satisfies $\mathrm{Var}(T(s_g|\pi, s))=0$, implying that $\Delta_{\text{Jensen}}$ vanishes and therefore
\begin{align*}
    V^\pi(s|s_g) &= \gamma^{d^{\pi}_T(s, s_g)}.
\end{align*}
Since $\gamma \in [0, 1)$, $V^\pi$ is monotonically decreasing with $d_T^\pi$
\begin{align*}
    \argmax_{\pi}V^\pi(s|s_g) = \argmin_\pi d_T^\pi(s, s_g) \; \forall \ s \in \sset
\end{align*}

That is, in the deterministic transition dynamics scenario, $\pi^* = \pi^\blacklozenge$.
\end{proof}

\analytical*
\begin{proof}   
The first step of the proof is to obtain an analytical expression for the the expected distance to the goal after $t$ steps as a function of the expected distance at $t=0$. To reduce the notation burden, denote $T(s_0)=T(s_g|\pi, s_0)$ and let $s_t(s_0)$ be the state after $t$ steps conditional on some starting state $s_0$ where actions are taken according to $\pi$. We have excluded $s_g$ and $\pi$ from the notation since they are fixed for the purpose of this proposition. Using the law of total expectation we have that for every initial $s_0$
\begin{align*}
\E_{s_t}[d(s_t(s_0), s_g)] &= \E_{T(s_0)}[\E_{s_t}[d(s_t(s_0), s_g) \mid T(s_0)]] = \E_{T(s_0)}[\max(T(s_0) - t, 0)],
\end{align*}
Now, by expanding the definition of $\rho_\pi(s\mid s_g)$ in \eqref{eqn:dt_W1}, exchanging the order of summation, and using the previous equation we may write
\begin{equation*}
\begin{aligned}
W_1^\pi(\rho_\pi, \rho_g) &=  \sum_{s \in \sset}  \sum_{t=0}^\infty (1 - \gamma) \gamma^t \E_{s_0}[P(s_t=s \mid \pi, s_g)] d^\pi_T(s, s_g)\\
&=\E_{s_0}\left[ (1 - \gamma) \sum_{t=0}^\infty \gamma^t \E_{s_t}[d(s_t(s_0), s_g) \mid s_0]\right] \\
&=  \E_{s_0}\left[\E_{T(s_0)}\left[(1 - \gamma) \sum_{t=0}^\infty \gamma^t \max(T(s_0) - t, 0) \Big\vert s_0\right]\right] 
\end{aligned}
\end{equation*}

Standard but tedious algebraic manipulations given in Lemma \ref{prop:analytical:support1} in the Appendix show that
\begin{align*}
\sum_{t=0}^\infty (1 - \gamma) \gamma^t \max(T(s_0) - t, 0) = T(s_0) - \frac{\gamma}{1-\gamma}(1 - \gamma^{T(s_0)}).
\end{align*}
Combining the two identities above we arrive at
\begin{equation}\label{eqn:analytical}
\begin{aligned}
W_1^\pi(\rho_\pi, \rho_g)  &= \E_{s_0}\left[\E_{T(s_0)}\left[T(s_0) - \frac{\gamma}{1 - \gamma}(1 - \gamma^{T(s_0)}) \Big\vert s_0\right]\right] \\&= \E_{s_0}\left[d(s_0, s_g) - \frac{\gamma}{1 - \gamma}(1 - \E[\gamma^{T(s_0)} \mid s_0])\right] \\
 & =\E_{s_0}\left[d(s_0, s_g) + \frac{\gamma}{1-\gamma}  \gamma^{d(s_0,s_g)}  - \frac{\gamma}{1 - \gamma}(1 - \E[\gamma^{T(s_0)}\mid s_0] + \gamma^{d(s_0,s_g)} ) \right]\\
 & =\E_{s_0}\left[d(s_0, s_g) + \frac{\gamma}{1-\gamma}  \gamma^{d(s_0,s_g)}  + \frac{\gamma}{1 - \gamma}(\Delta^\pi_{\text{Jensen}}(s_0) - 1)\right].
\end{aligned}
\end{equation}

To finalize the proof, we only need to show that the function $h(\mu) = \mu + \frac{\gamma}{1 - \gamma}\gamma^\mu$ is monotonically increasing for every $\gamma\in [0, 1)$. This is a standard calculus exercise that we show in Lemma \ref{prop:analytical:support2} in Appendix \ref{appendix:aux-prop3}.
\end{proof}

\aimsearch*

\begin{proof}
Proposition \ref{prop:corr} shows that the Jensen gap vanishes for the optimal policy of an MDP with deterministic transitions and that it minimizes the expected distance from start for all initial states.
Proposition \ref{prop:analytical}, on the other hand, implies that when the Jensen gap vanishes, the Wasserstein distance is monotonically increasing in the expected distance from the start.
Together, the two propositions show that $\pi^*$ minimizes the Wasserstein distance.
\end{proof}

\begin{algorithm2e}[t]
\caption{\textsc{aim} + \textsc{her}}
\label{alg}
\SetAlgoLined
\KwIn{Agent policy $\pi_{\theta}$, discriminator $f_{\phi}$, environment $env$, \\ number of Epochs $N$, number of time-steps per epoch $K$, \\ policy update period $k$, discriminator update period $m$, episode length $T$, \\ replay buffer (for HER), smaller replay buffer (for discriminator)}
Initialize discriminator parameters $\phi$\;
Initialize policy parameters $\theta$\;
\For{$n = 0, 1, \ldots , N - 1$}{
$t=0$\;
goal\_reached = True\;
\While{$t < K$}{
\If{goal\_reached or episode\_over}{
Sample goal $s_g \sim \sigma(\mathcal{G})$\;
Sample start state $s \sim \rho_0(\sset)$\;
goal\_reached = False\;
episode\_over = False\;
$t_{start} = K$\;
}
Sample action $a \sim \pi_\theta(\cdot|s, s_g)$\;
$s' = env.step(a)$\;
\If{$s' = s_g$}{
    goal\_reached = True\;
}
\tcp{end episode if goal not reached in $T$ steps}
\If{$t - t_{start} = T$}{
    episode\_over = True\;
}

Add $(s, a, s', s_g, goal\_reached)$ to replay buffer and smaller replay buffer\;
\If{goal\_reached or episode\_over}{ \label{alg:her}
    Add hindsight goals to both buffers\;
}
\tcp{Update policy parameters $\theta$ every $k$ steps}
\If{$t \% k = 0$}{
Sample tuples $(s, a, s', s_g, goal\_reached)$ from replay buffer\;
Get intrinsic reward (Equation \ref{eqn:reward})\;
Update policy parameters $\theta$ using any off-policy learning algorithm\;
}
\tcp{Update discriminator parameters $\phi$ every $m$ steps}
\If{$t \% m = 0$}{
Sample tuples $(s, a, s', s_g, goal\_reached)$ from smaller replay buffer\;
Update discriminator parameters $\phi$ using Equation \ref{eqn:f_loss}\;
}
$t = t + 1$\;
}
Evaluate agent policy\;
}
\end{algorithm2e}

\section{Auxiliary results for Proposition \ref{prop:analytical}}\label{appendix:aux-prop3}

\begin{lemma}\label{prop:analytical:support1} Let $T$ be a positive integer. Then
\begin{align*}
\sum_{t=0}^\infty (1 - \gamma) \gamma^t \max(T - t, 0) = T - \frac{\gamma}{1-\gamma}(1 - \gamma^{T}).
\end{align*}
\end{lemma}
\begin{proof}
Direct computation gives
\begin{align*}
 (1 - \gamma) \sum_{t=0}^\infty\gamma^t \max(T - t, 0) & = (1 - \gamma) \sum_{t=0}^{T -1} \gamma^t (T - t) \\
& = (1 - \gamma)T \sum_{t=0}^{T -1} \gamma^t - (1 - \gamma)\sum_{t=0}^{T -1} t \gamma^t
\end{align*}
We will now simplify the two terms of the last expression. For the first one, have
\begin{align*}
(1 - \gamma)T \sum_{t=0}^{T -1} \gamma^t  = (1 - \gamma)T\frac{1 - \gamma^T}{1 - \gamma} = T - T\gamma^T.
\end{align*}
For the second one, the computations are a bit more involved
\begin{align*}
(1 - \gamma)\sum_{t=0}^{T -1} t \gamma^t &= (1 - \gamma)\gamma \sum_{t=1}^{T -1} t \gamma^{t - 1} \\
& =(1 - \gamma) \sum_{t=1}^{T -1} \gamma \frac{d}{d\gamma}\gamma^{t} \\
& = \gamma(1 - \gamma) \frac{d}{d\gamma}\sum_{t=0}^{T -1} \gamma^{t}\\
& = \gamma(1 - \gamma) \frac{d}{d\gamma} \frac{1 - \gamma^T}{1 - \gamma} \\
& = \frac{\gamma}{(1 - \gamma)}\left(- T \gamma^{T - 1}(1 - \gamma) + (1  - \gamma^T)\right) = - T \gamma^T + \frac{\gamma}{(1 - \gamma)}(1 - \gamma^T).
\end{align*}
When combining the two simplified expressions the terms with $T \gamma^T$ will cancel out, yielding the desired expression.
\end{proof}

\begin{lemma} \label{prop:analytical:support2}  The function $h_\gamma(\mu) = \mu + \frac{\gamma}{1 - \gamma}\gamma^\mu$ is monotonically increasing for every $\gamma\in [0, 1)$.
\end{lemma}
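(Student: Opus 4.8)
The plan is to show that $h_\gamma$ is increasing by verifying that its derivative is nonnegative on the relevant domain $\mu \geq 0$ (where $\mu$ plays the role of an expected distance $d_T^\pi(s_0,s_g)\geq 0$). First I would dispose of the degenerate case $\gamma = 0$, for which the factor $\frac{\gamma}{1-\gamma}\gamma^\mu$ vanishes and $h_0(\mu)=\mu$ is trivially increasing. For $\gamma \in (0,1)$ I would differentiate,
\[
    h_\gamma'(\mu) = 1 + \frac{\gamma}{1-\gamma}\,\gamma^\mu \ln\gamma ,
\]
and reduce the problem to a single point.

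The key observation is that $\ln\gamma < 0$ while $\gamma^\mu$ is positive and strictly decreasing in $\mu$; hence the (negative) second term has shrinking magnitude and increases toward $0$ as $\mu$ grows. Consequently $h_\gamma'$ is itself increasing in $\mu$ and attains its minimum over $\mu\geq 0$ at $\mu = 0$. It therefore suffices to verify the single inequality $h_\gamma'(0) = 1 + \frac{\gamma}{1-\gamma}\ln\gamma \geq 0$.

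This bound at $\mu = 0$ is the one substantive step, and the part I expect to be the main obstacle. I would establish it from the elementary fact $\ln x \leq x - 1$ for $x > 0$ (equality only at $x=1$), applied at $x = 1/\gamma$: this gives $\ln\gamma \geq \frac{\gamma-1}{\gamma}$, so that $\frac{\gamma}{1-\gamma}\ln\gamma \geq \frac{\gamma}{1-\gamma}\cdot\frac{\gamma-1}{\gamma} = -1$, whence $h_\gamma'(0)\geq 0$, strictly for $\gamma \in (0,1)$. Combined with the monotonicity of $h_\gamma'$, this yields $h_\gamma'(\mu) > 0$ for all $\mu \geq 0$ and proves $h_\gamma$ strictly increasing.

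Finally, I would note a sanity check confirming that the statement is genuinely tied to the domain $\mu \geq 0$: as $\mu \to -\infty$ the factor $\gamma^\mu \to +\infty$ drives $h_\gamma'$ negative, so monotonicity cannot hold on all of $\mathbb{R}$. The nonnegativity of the distance argument is precisely what makes the claim true, and it is implicitly the intended domain inherited from Proposition \ref{prop:analytical}.
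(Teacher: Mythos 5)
Your proof is correct, and it is worth comparing carefully with the paper's own argument, because the two diverge at exactly the step you flagged as the main obstacle --- and the paper's version of that step is actually flawed. Both proofs compute $h_\gamma'(\mu) = 1 + \frac{\gamma^{\mu+1}\log\gamma}{1-\gamma}$ and reduce to showing the second (negative) term exceeds $-1$. The paper then discards the entire factor $\gamma^{\mu+1}$ via $\log(\gamma)\gamma^{\mu+1} > \log(\gamma)$ and claims $\log(\gamma)/(1-\gamma) > -1$; but the tangent-line inequality $\log(\gamma) \leq \gamma - 1$ gives precisely the opposite, $\log(\gamma)/(1-\gamma) \leq -1$ (e.g.\ $\gamma = 0.5$ yields $\approx -1.386$), so the paper's final chain $1 + \frac{\log\gamma}{1-\gamma} > 0$ is false and its lower bound is vacuous. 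Your argument succeeds because you throw away less: you keep the factor $\gamma$ by using monotonicity of $h_\gamma'$ in $\mu$ to reduce to $\mu = 0$, where the term is $\frac{\gamma\log\gamma}{1-\gamma}$, and then apply $\log x \leq x-1$ at $x = 1/\gamma$ to get exactly $\frac{\gamma}{1-\gamma}\log\gamma > -1$. That retained factor of $\gamma$ is what makes the bound tight enough, and your strictness bookkeeping (equality in $\log x \leq x - 1$ only at $x=1$) is right. Your closing remark that the claim fails for $\mu < 0$ is also a genuine clarification: the lemma is only true on $\mu \geq 0$, the domain inherited from $\mu = d_T^\pi(s_0,s_g)$, and your reduction to the endpoint $\mu = 0$ makes that dependence explicit where the paper leaves it implicit.
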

\begin{proof}
We must show that $\frac{d}{d\mu}h_\gamma(\mu) > 0$ for every $\gamma\in[0,1)$ and every $\mu > 0$. Computing the derivative directly we obtain
\begin{align*}
\frac{d}{d\mu}h_\gamma(\mu) = 1 + \frac{\log(\gamma)\gamma^{\mu + 1}}{1 - \gamma}.
\end{align*}
Thus, it will suffice to show that the second term above is greater than -1. For this purpose, first note that $\log(\gamma)\gamma^{\mu + 1} > \log(\gamma)$ since $\gamma < 1$. Now, we use the fact that $\log(\gamma) < 1 - \gamma$ for $\gamma < 1$. This can be verified noting that $1 - \gamma$ is the tangent line to the concave curve $\log(\gamma)$ and the curves meet at $\gamma=1$. And therefore $\log(\gamma) / (1 - \gamma) > -1$. Putting these observation together,
\begin{align*}
\frac{d}{d\mu}h_\gamma(\mu)  = 1 + \frac{\log(\gamma)\gamma^{\mu + 1}}{1 - \gamma} > 1 + \frac{\log(\gamma)}{1 - \gamma} > 1 - 1 = 0,
\end{align*}
concluding the proof.
\end{proof}

\section{Grid World Experiments} \label{app:grid}

\begin{figure}
    \centering
    \begin{subfigure}[t]{.3\textwidth}
        \centering
        \includegraphics[width=\linewidth]{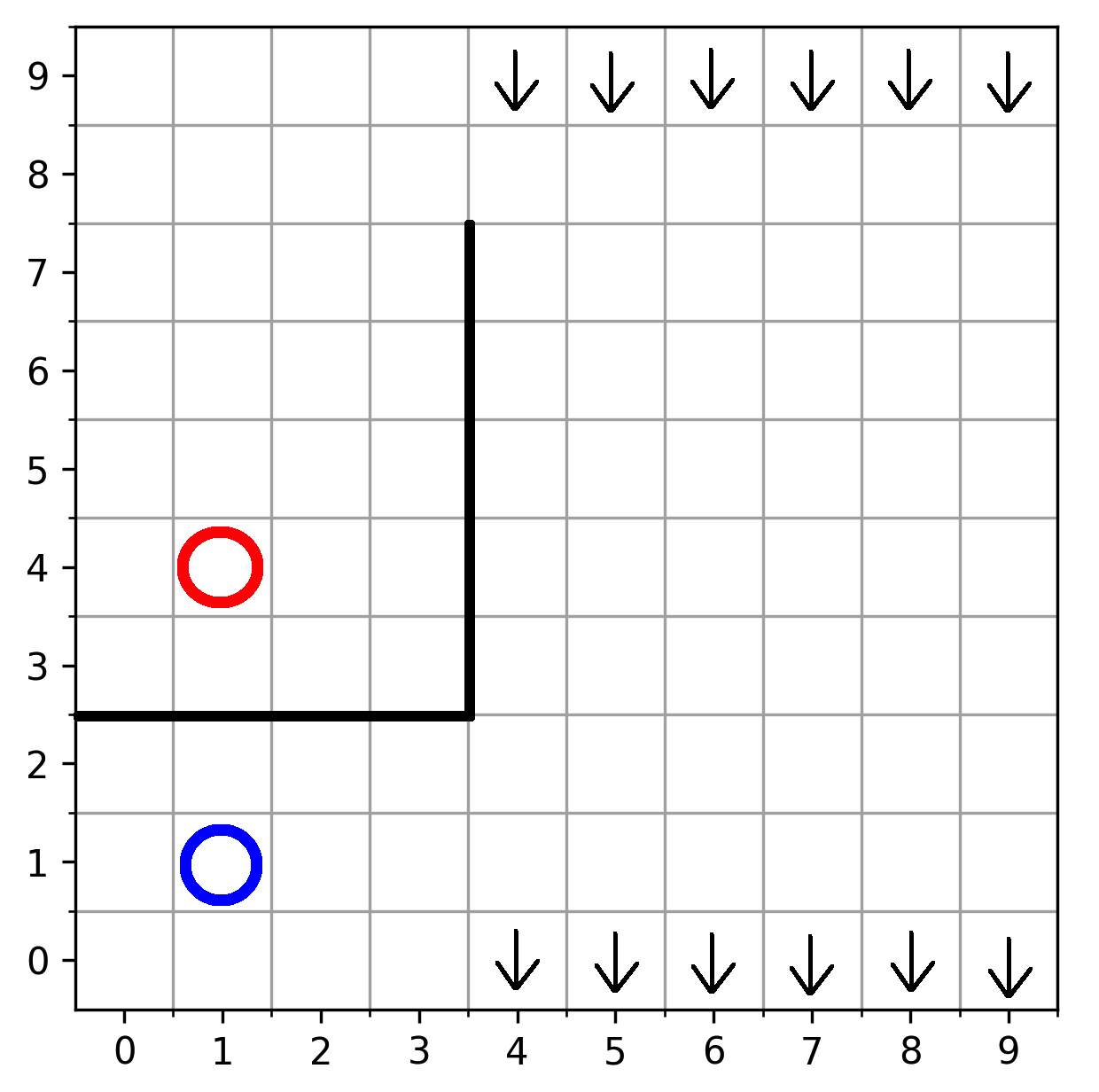}
        \caption{Grid world with wind affecting transitions in last 6 columns}
        \label{fig:windy_grid}
    \end{subfigure}%
    \hfill
    \begin{subfigure}[t]{.36\textwidth}
        \centering
        \includegraphics[width=\linewidth]{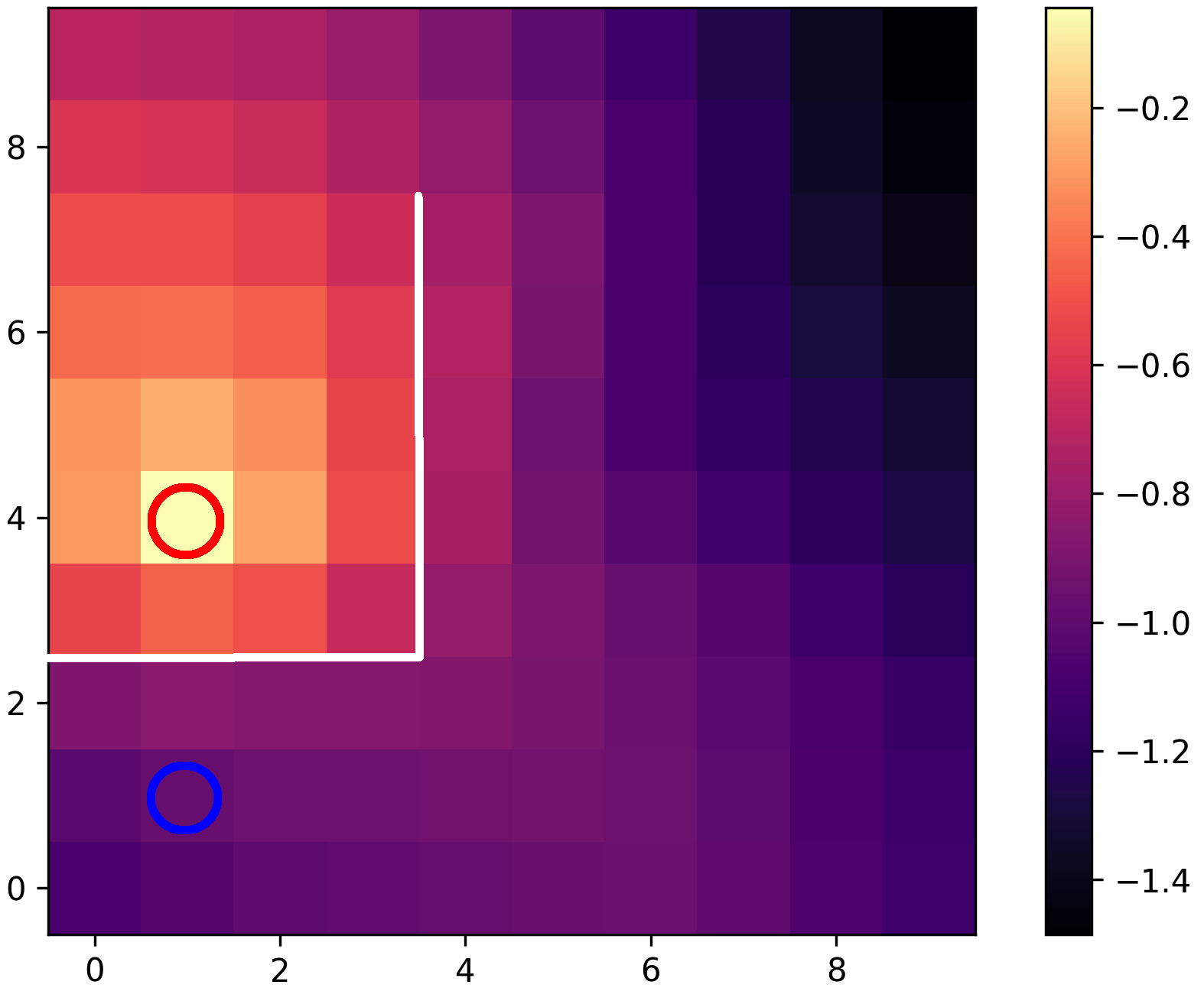}
        \caption{Learned Reward (50 training iterations)}
        \label{fig:windy_reward}
    \end{subfigure}
    \hfill
    \begin{subfigure}[t]{.3\textwidth}
        \centering
        \includegraphics[width=\linewidth]{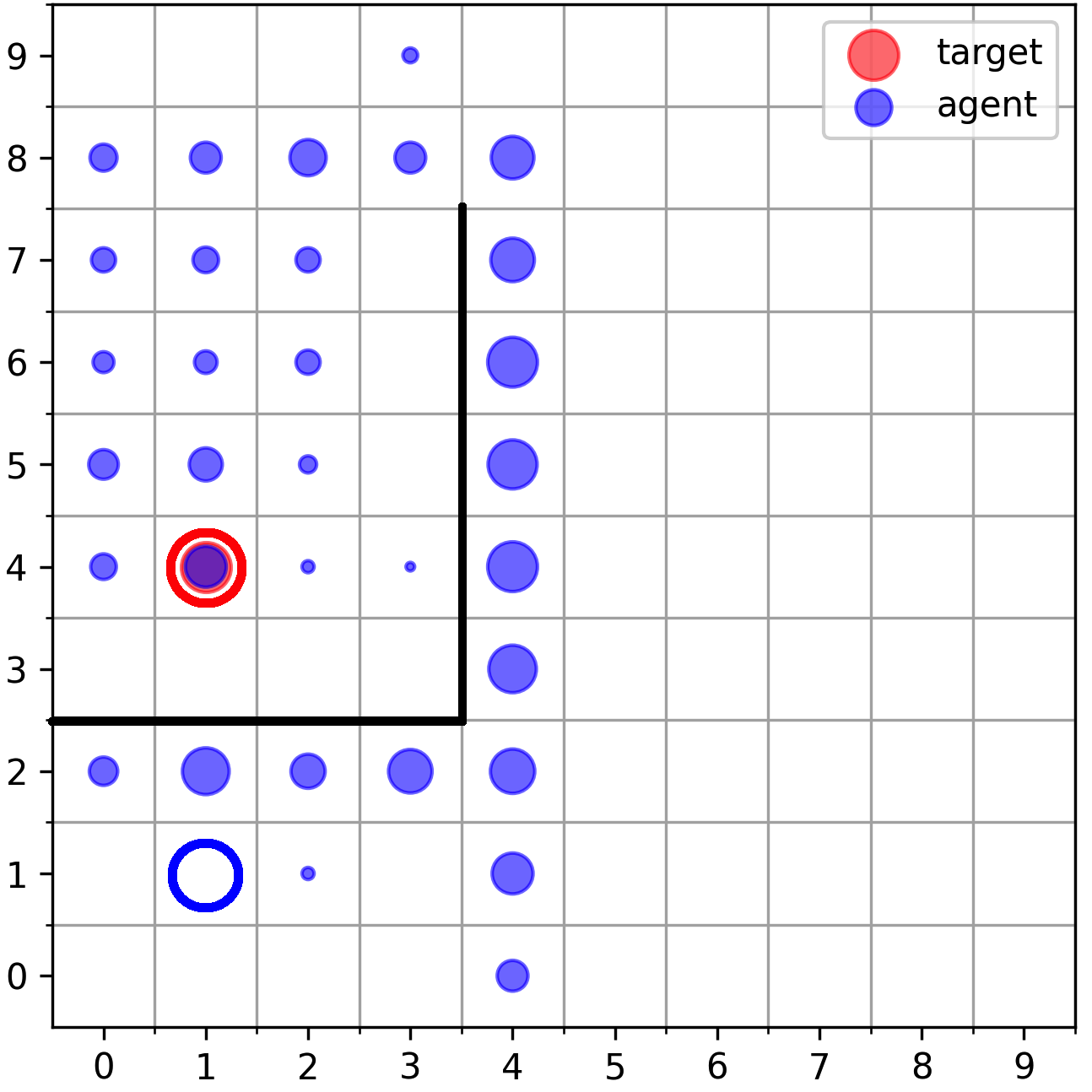}
        \caption{Agent state distribution learning with AIM reward (50 training iterations)}
        \label{fig:windy_policy}
    \end{subfigure}
    \caption{Windy grid world (Figure \ref{fig:windy_grid}) experiments. The columns with arrows at the top and bottom have stochastic and asymmetric transitions induced by wind blowing from the top.
    Learned reward function (Figure \ref{fig:windy_reward}). Reward at each state of the grid world after training for 50 iterations with \textsc{aim}. Hollow red circle indicates the goal state. White lines indicate the walls the agent cannot transition through.
    The agent's state visitation (Figure \ref{fig:windy_policy}): The hollow blue circle indicates agent's start state. The hollow red circle is the goal. Blue bubbles indicate relative time the agent's policy causes it to spend in respective states. Black lines indicate walls.}
    \label{fig:windy_gridworld}
\end{figure}

\paragraph{Basic experiment} The environment is a $10 \times 10$ grid with $4$ discrete actions that take the agent in the $4$ cardinal directions unless blocked by a wall or the edge of the grid.
The agent policy is learned using soft Q-learning \citep{haarnoja2017reinforcement},
with an entropy coefficient of $0.1$ and a discount factor of $\gamma=0.99$. We do not use hindsight goals for this experiment, and use a single buffer with size $5000$ for both the policy as well as the discriminator training. The results are discussed in the main text.
The compute used to conduct these experiments was a personal laptop with an Intel i7 Processor and 16 GB of RAM.

\paragraph{Additional experiments} We conducted variations form the basic experiment in the grid world to show that \aim\ and its novel regularization can learn a reward function which guides the agent to the goal even in the presence of stochastic transitions as well as transitions where the state features vary wildly from one step to the next.

First, we evaluate \aim's ability to learn in the presence of stochastic and asymmetric transitions in a windy version (Figure \ref{fig:windy_grid}) of the above grid world.
Transitions in the last six columns of the grid are affected by a wind blowing from the top.
Actions that try to move upwards only succeed $60\%$ of the time, and actions attempting to move sideways cause a transition diagonally downwards $40\%$ of the time.
Movements downwards are unaffected.
The rest of the experiment is carried out in the same way as above, but with $128$ hidden units in the hidden layer of the agent's Q function approximator (the reward function architecture is unchanged from the previous experiment).
In Figure \ref{fig:windy_gridworld} we see that \aim\; learns a reward function that is still useful and interpretable, and leads to a policy that can confidently reach the goal, regardless of these stochastic and asymmetric transitions.
Notice the effect of the stochastic transitions in the increased visitation in the sub-optimal states in the bottom two rows of column number $4$.

The next experiment tests what happens when the transition function causes the agent to jump between states where the state features vary sharply.
As an example consider a toroidal grid world, where if an agent steps off one side of the grid it is transported to the other side.
The distance function here should be smooth across such transitions, but might be hampered by the sharp change in input features.
In Figure \ref{fig:toroid} we see show the policy and reward for a $10\times10$ toroidal grid world with start state at $(2,2)$ and goal at $(7, 7)$.
Transitions are deterministic but wrap around the edges of the grid as described above: a \textbf{down} action in row $0$ will transport the agent to the same column but row $9$.
The start and the goal state are set up so that there are multiple optimal paths to the goal.
The entropy maximizing soft Q-learning algorithm should take these paths with almost equal probability.
From Figure \ref{fig:toroid} it is evident that \aim\; learns a reward function that is smooth across the actual transitions in the environment and allows the agent to learn a Q-function that places near equal mass on multiple trajectories.

\begin{figure}[b]
    \centering
    \begin{subfigure}{.34\textwidth}
    \includegraphics[width=\textwidth]{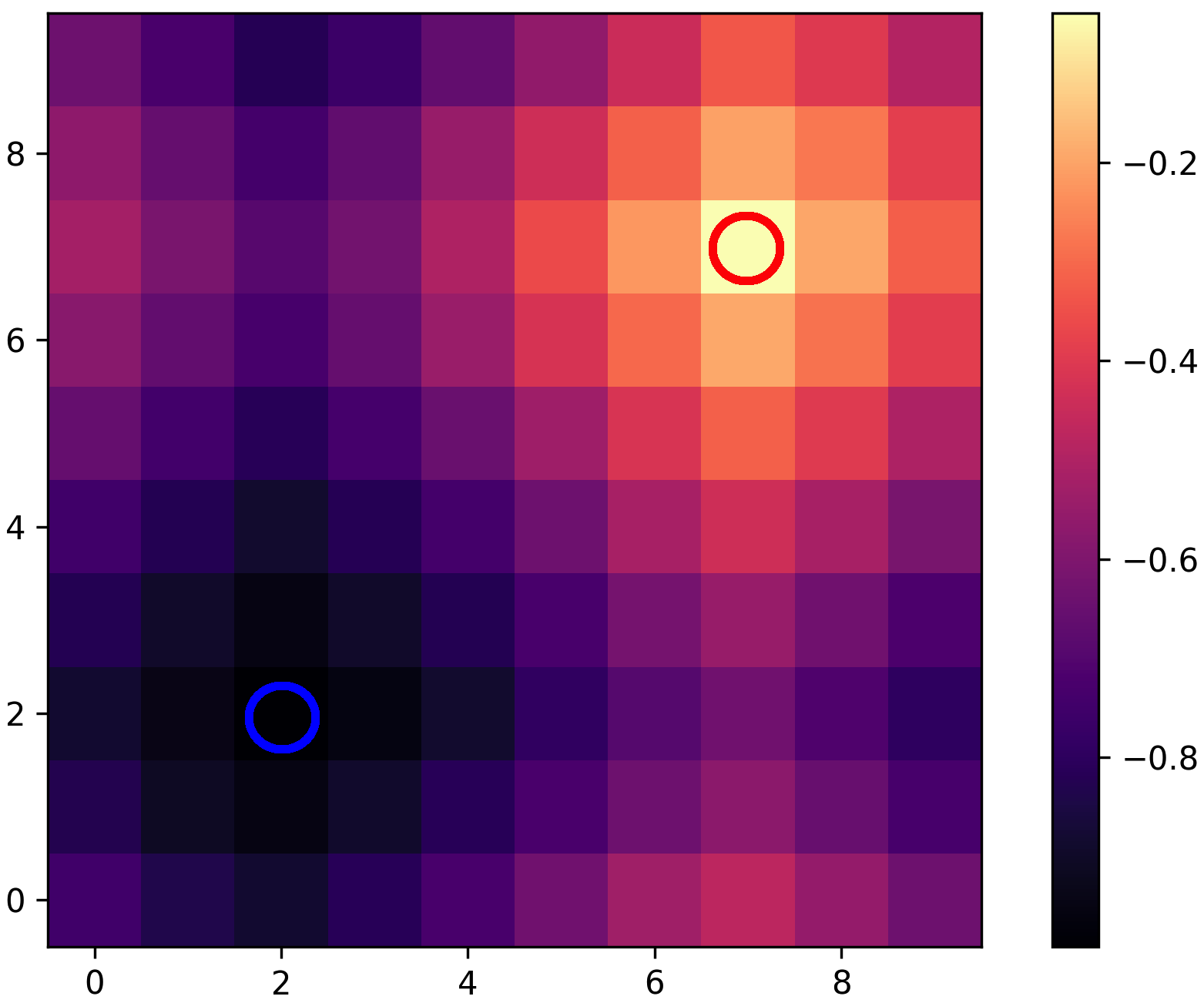}
    \caption{Reward function with \aim}
    \label{fig:r_toroid}
    \end{subfigure}%
    \hfill
    \begin{subfigure}{.29\textwidth}
    \includegraphics[width=\textwidth]{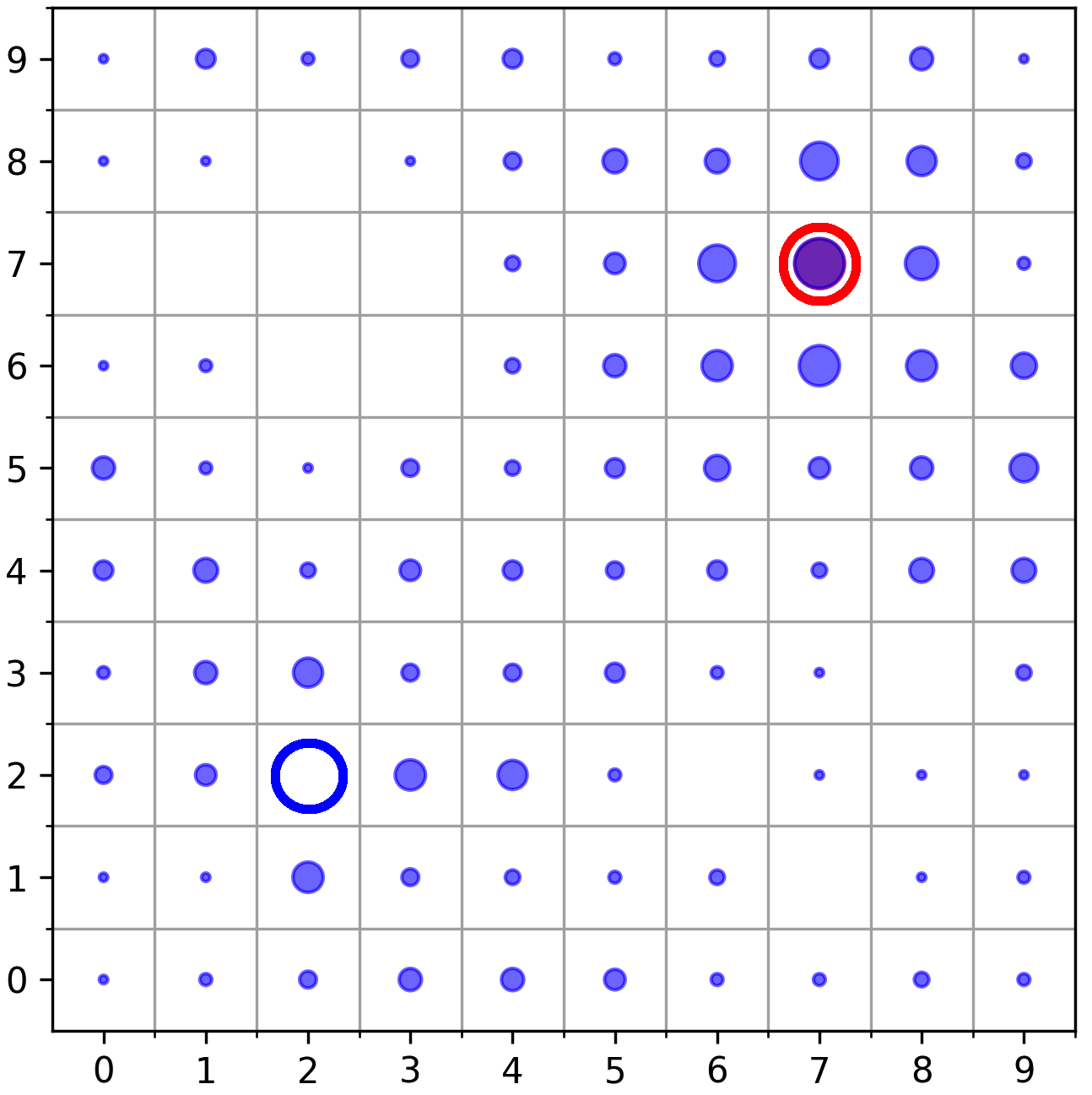}
    \caption{Policy distribution under \aim}
    \label{fig:p_toroid}
    \end{subfigure}%
    \hfill
    \begin{subfigure}{.34\textwidth}
    \includegraphics[width=\textwidth]{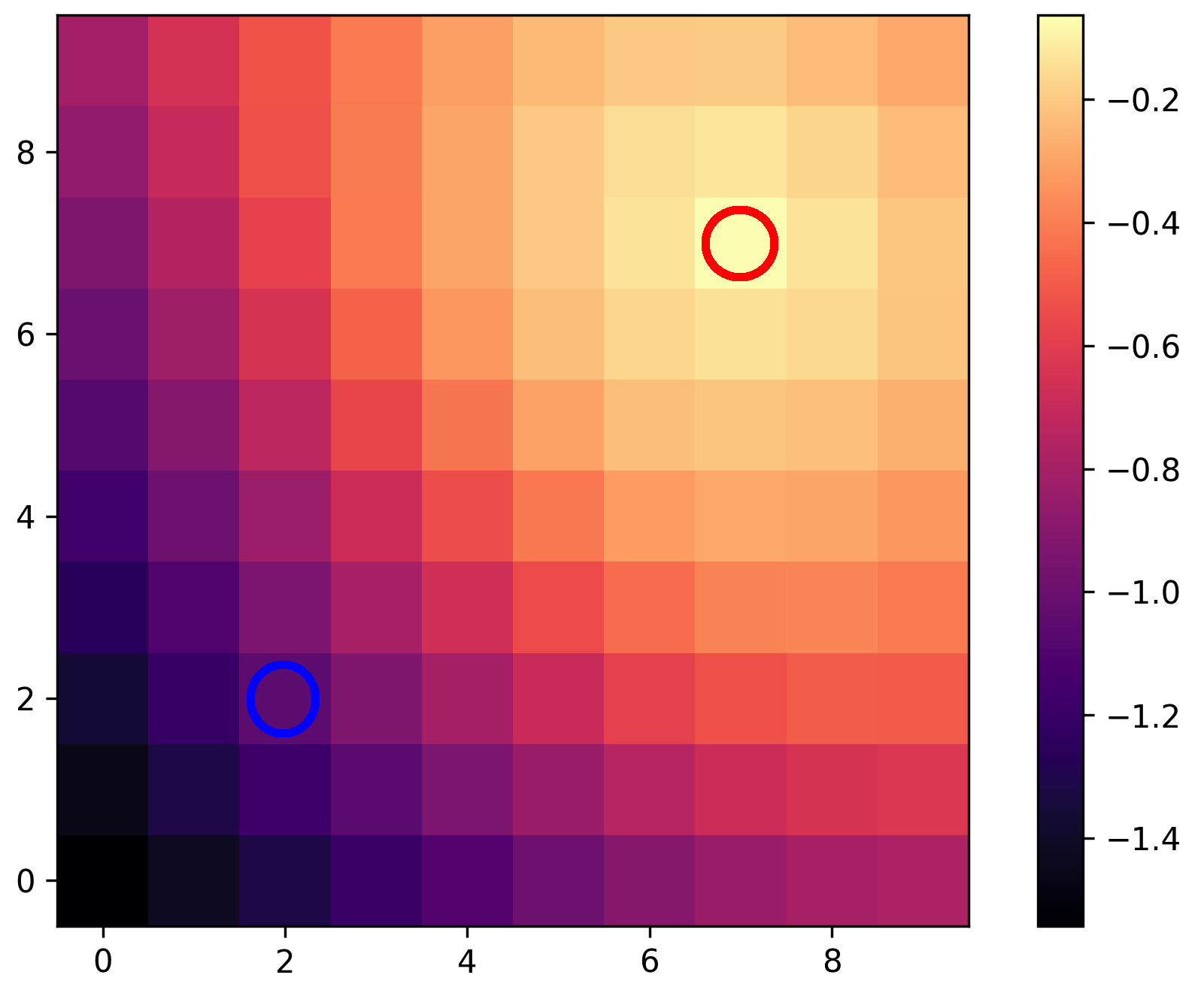}
    \caption{Reward function estimated with WGAN loss}
    \label{fig:r_gnorm}
    \end{subfigure} 
    \caption{The reward function (Figure \ref{fig:r_toroid}) learned with \aim\; and subsequent policy distribution (Figure \ref{fig:p_toroid}) in a toroidal grid world, where an agent can transition from one edge of the grid across to the other. The hollow blue circle denotes the start state and the hollow red circle is the goal state. The reward function respects the sharp transitions from one end of the grid to the other. Conversely, if the reward function is learned using the WGAN objective \citep{gulrajani_improved_2017} (Figure \ref{fig:r_gnorm}), it does not respect the environment dynamics.}
    \label{fig:toroid}
\end{figure}

Finally, we compare learning with \aim\ to the baselines mentioned in Section \ref{sec:exp}.
RND, SMiRL, and MC were implemented and debugged on the grid world domain with a goal that is easier to reach before being used on the Fetch robot tasks.
Hyper-parameters for the algorithms in both domains were determined through sweeps.
In the Fetch domains, the hyperparameters for all three new baselines were decided on through sweeps on the FetchReach task, similar to how they were evaluated for \aim\ and the other baselines.

Figure \ref{fig:add_grid} shows the results of executing these additional baselines on the grid world domain we use to motivate \aim.
All the plots are taken after the techniques have had the same number of training iterations.
However none of the baselines reach the goal even after providing additional time.
We show the negative L2 distance to goal as a reward in the grid world domain to highlight that the DiscoRL \citep{Nasiriany:EECS-2020-151} objective should not be considered equivalent to an oracle of the distance to goal.
Note that RND (Figure \ref{fig:grid_rnd}) explores most of the larger room early on, and then converges to the state distribution seen in the figure when it does not encounter the task reward.
The SMiRL reward encourages the agent to minimize surprise, and the policy trained with this reward keeps the agent in the bottom left near its start state (Figure \ref{fig:grid_smirl}).

\begin{figure}[ht]
    \centering
    \begin{subfigure}{.33\textwidth}
    \includegraphics[width=\textwidth]{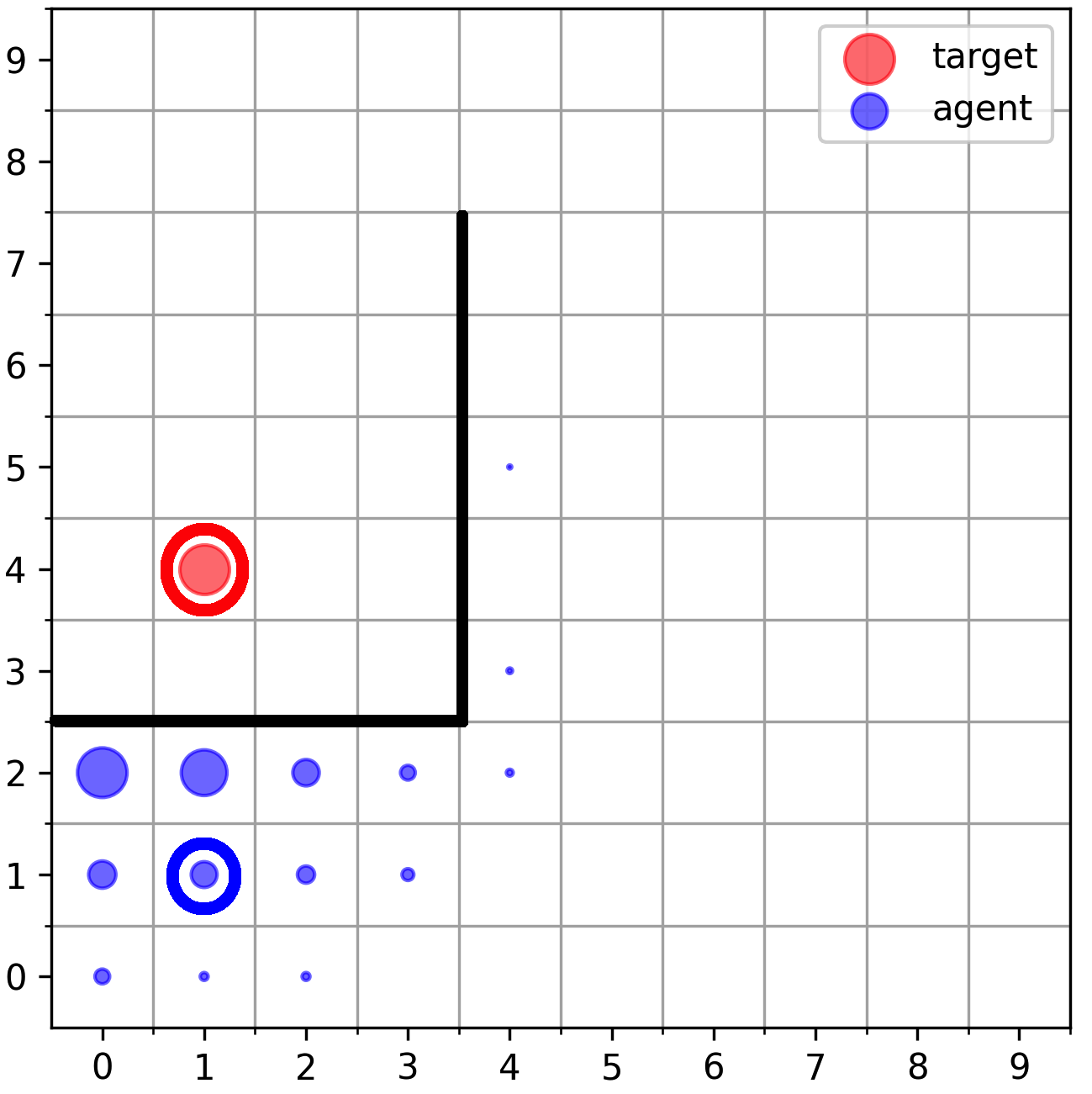}
    \caption{State visitation with $- L2$ reward (DiscoRL)}
    \label{fig:grid_disco}
    \end{subfigure}%
    \hfill
    \begin{subfigure}{.33\textwidth}
    \includegraphics[width=\textwidth]{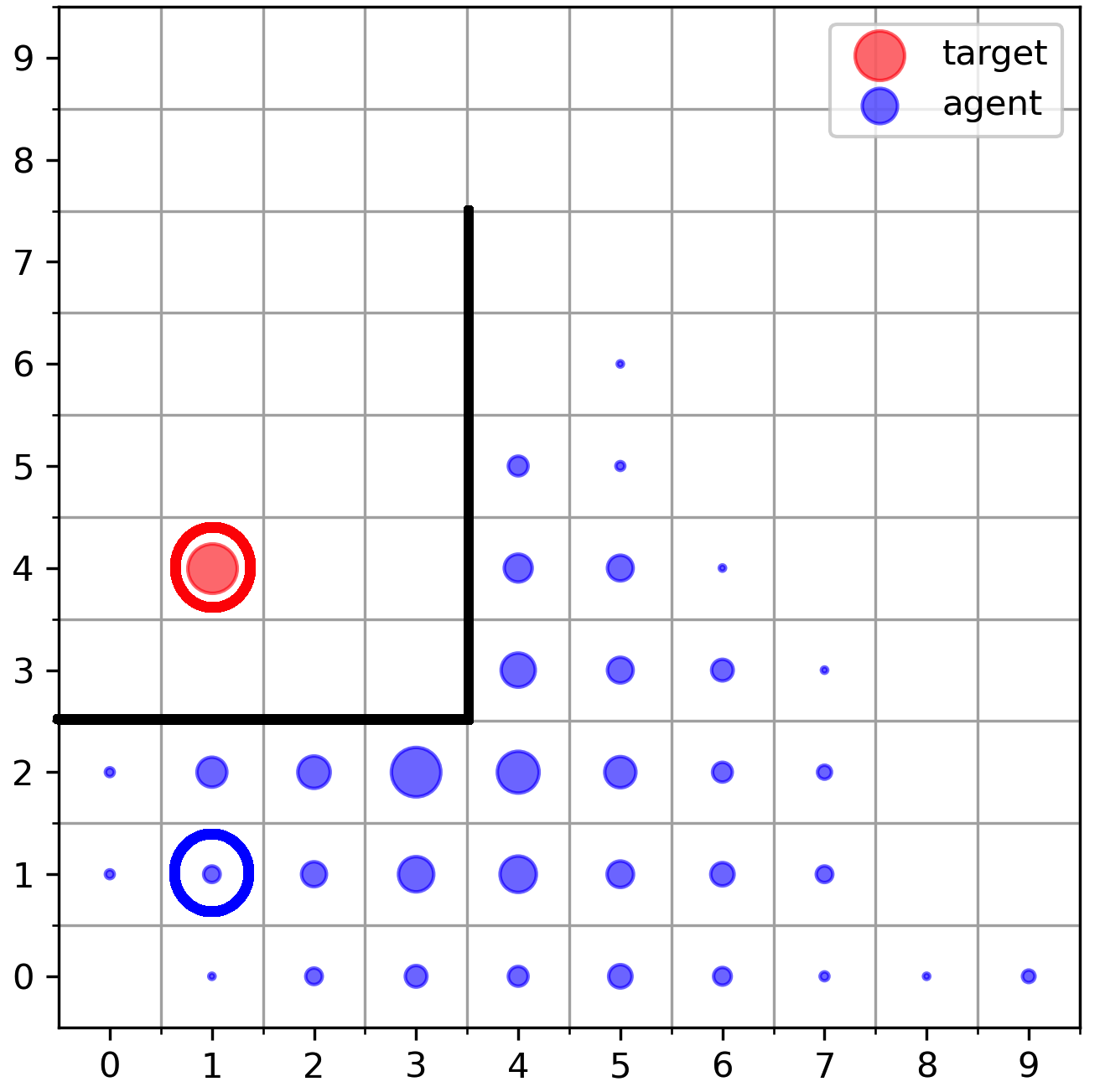}
    \caption{State visitation with MC distance estimation}
    \label{fig:grid_ddl}
    \end{subfigure}%
    \hfill
    \begin{subfigure}{.33\textwidth}
    \includegraphics[width=\textwidth]{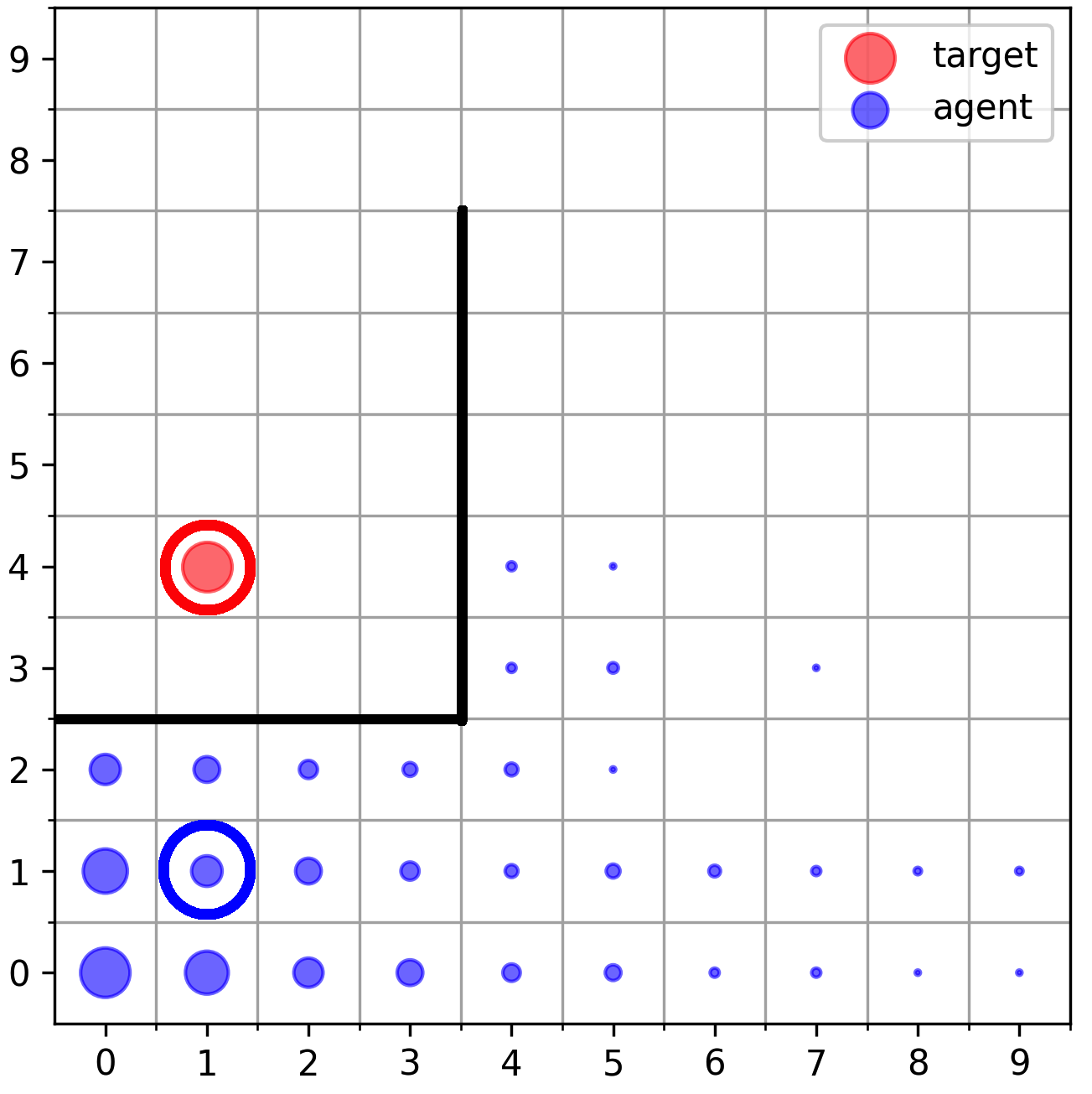}
    \caption{State visitation with added RND reward}
    \label{fig:grid_rnd}
    \end{subfigure}
    \\
    \centering
    \begin{subfigure}{.33\textwidth}
    \includegraphics[width=\textwidth]{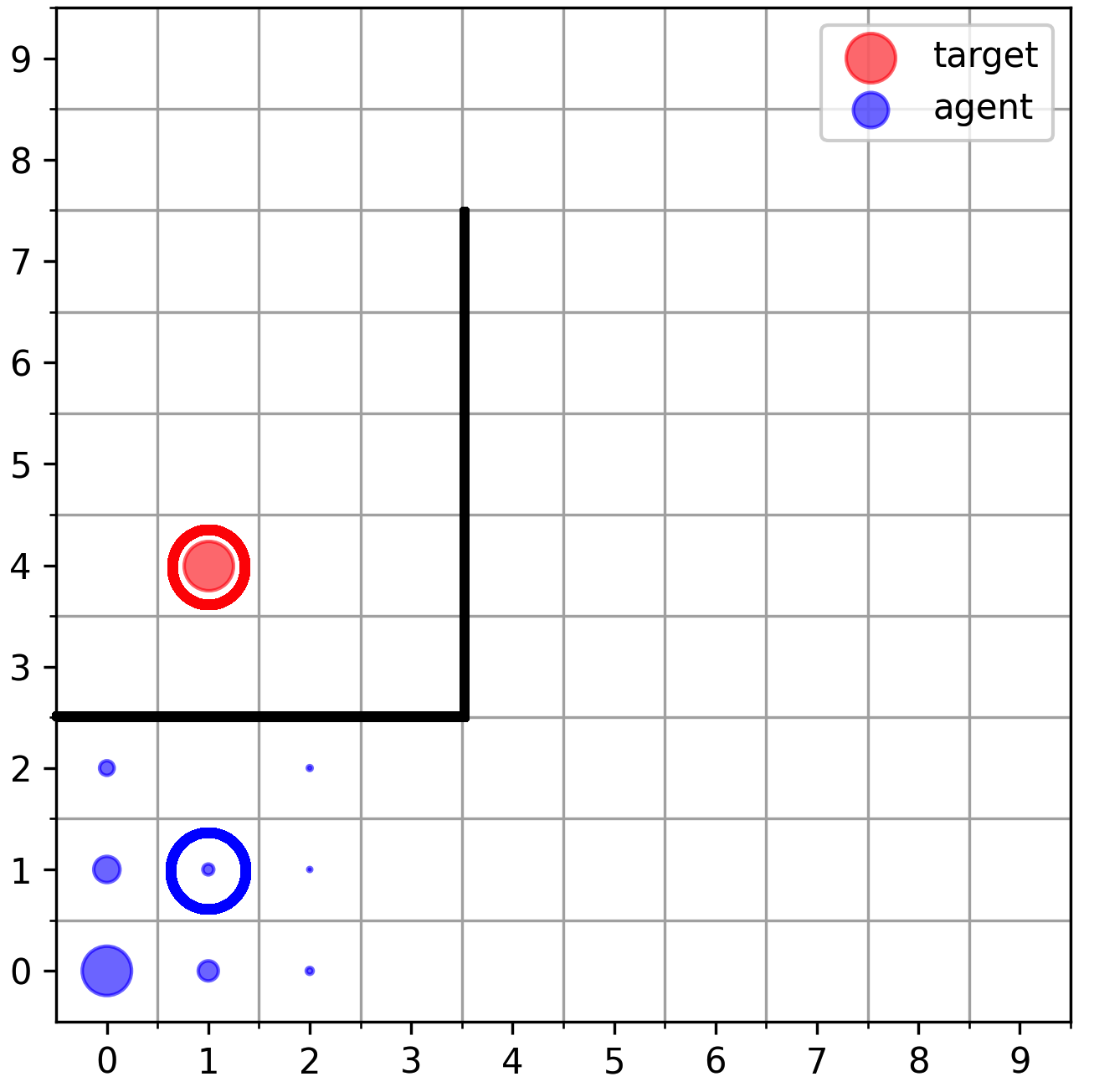}
    \caption{State visitation with SMiRL rewards}
    \label{fig:grid_smirl}
    \end{subfigure}%
    \hfill
    \begin{subfigure}{.33\textwidth}
    \includegraphics[width=\textwidth]{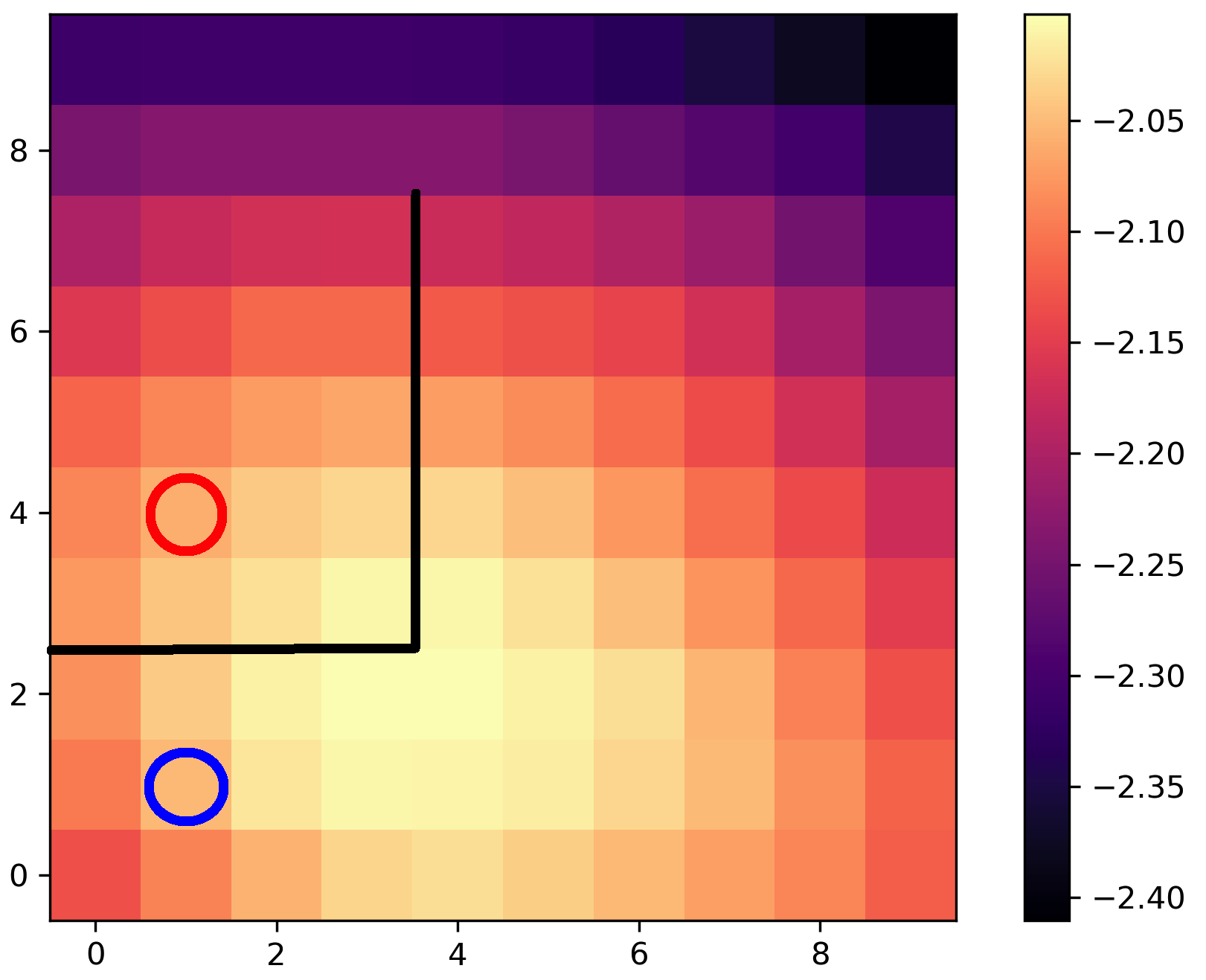}
    \caption{DDL reward function}
    \label{fig:grid_rew_ddl}
    \end{subfigure}%
    \hfill
    \begin{subfigure}{.33\textwidth}
    \includegraphics[width=\textwidth]{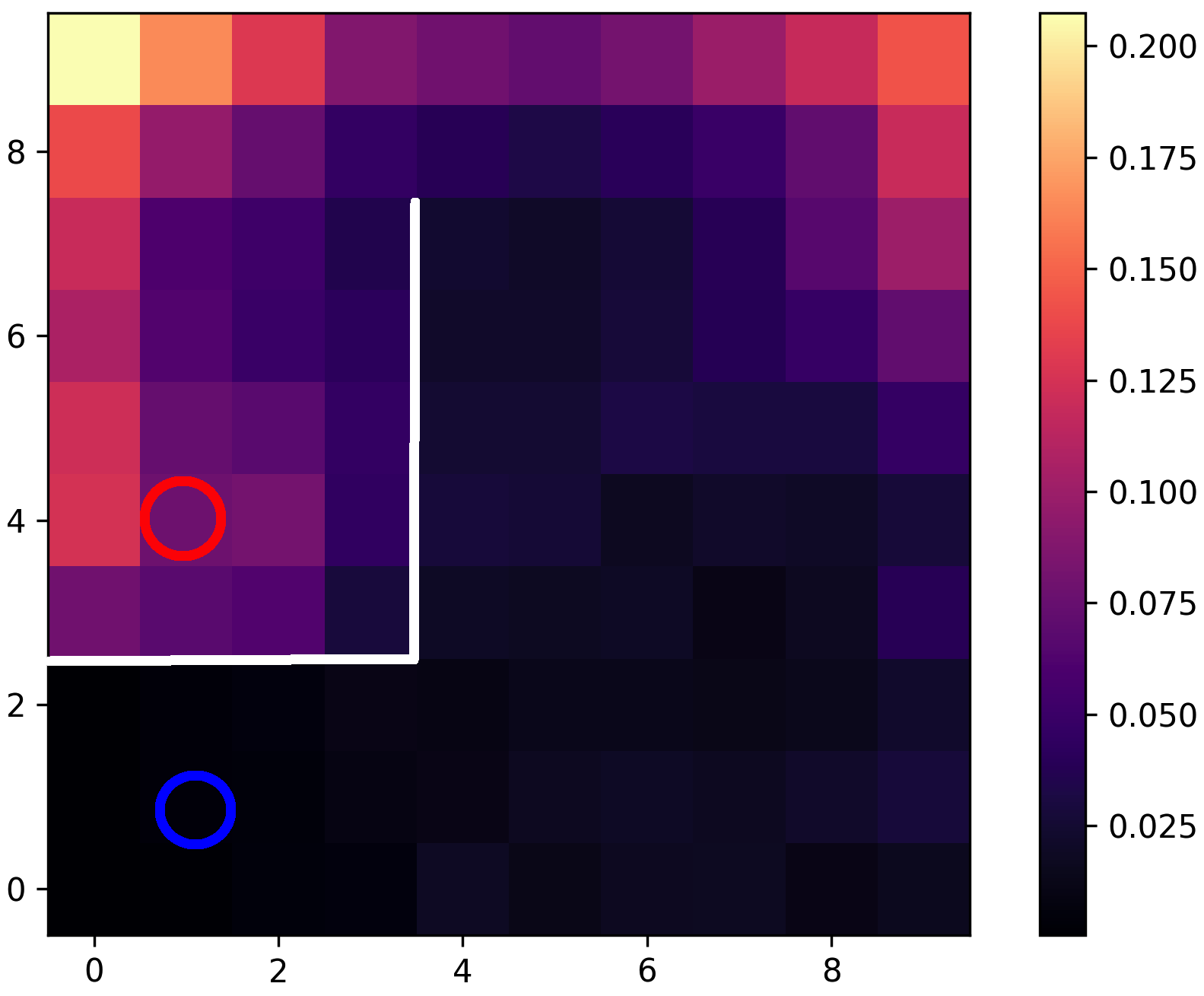}
    \caption{RND reward function}
    \label{fig:grid_rew_rnd}
    \end{subfigure}
    \caption{The state of the state visitation and reward functions for the new baselines. For camparison, Figure \ref{fig:grid_policy} shows the state visitation of policy trained using \aim. All algorithms are compared after 100 training iterations.}
    \label{fig:add_grid}
\end{figure}

\begin{figure}[t]

    \centering
    \begin{subfigure}{.48\textwidth}
    \includegraphics[width=\textwidth]{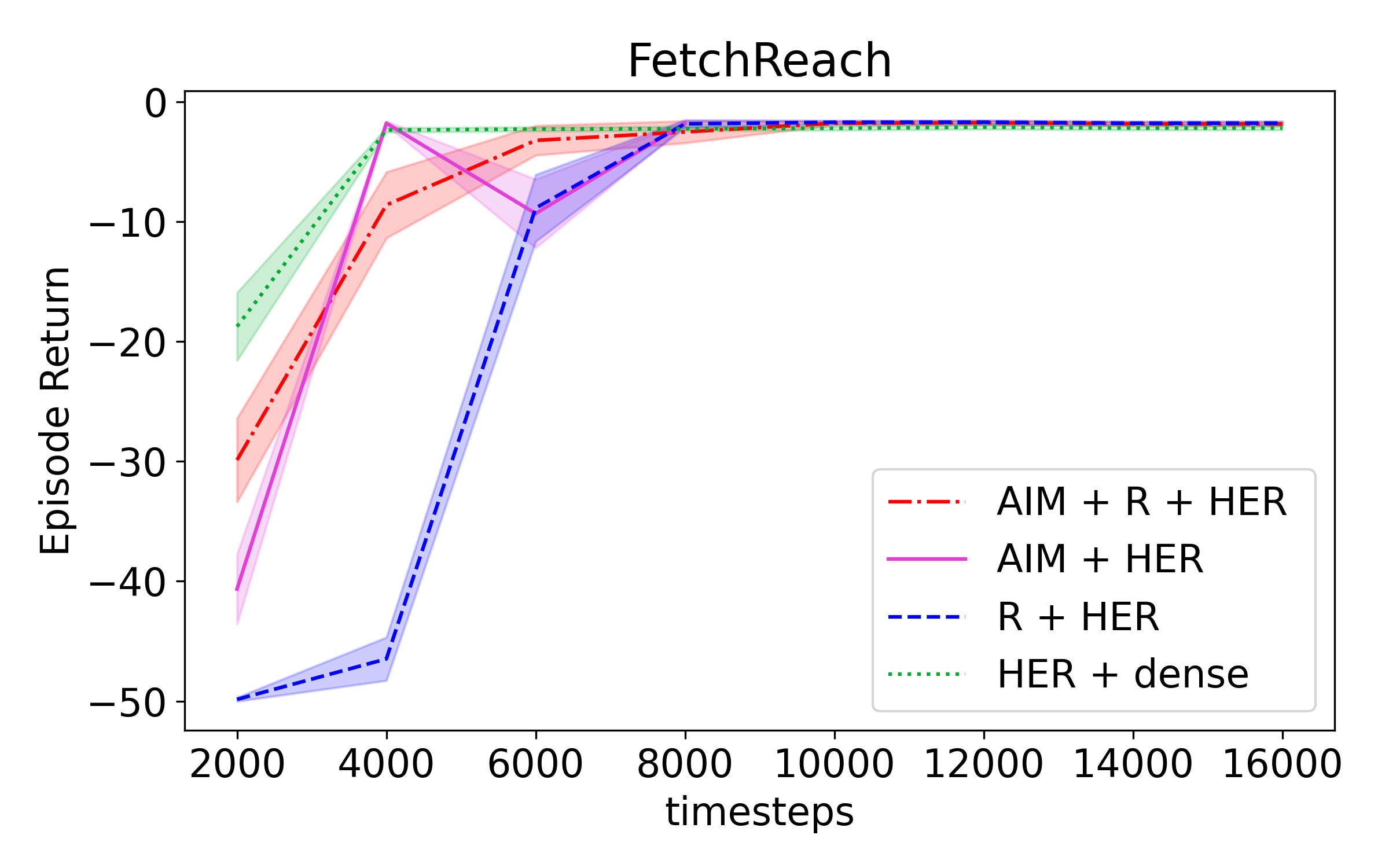} 
    \caption{Reach}
    \label{fig:app_reach}
    \end{subfigure}%
    \hfill
    \begin{subfigure}{.48\textwidth}
    \includegraphics[width=\textwidth]{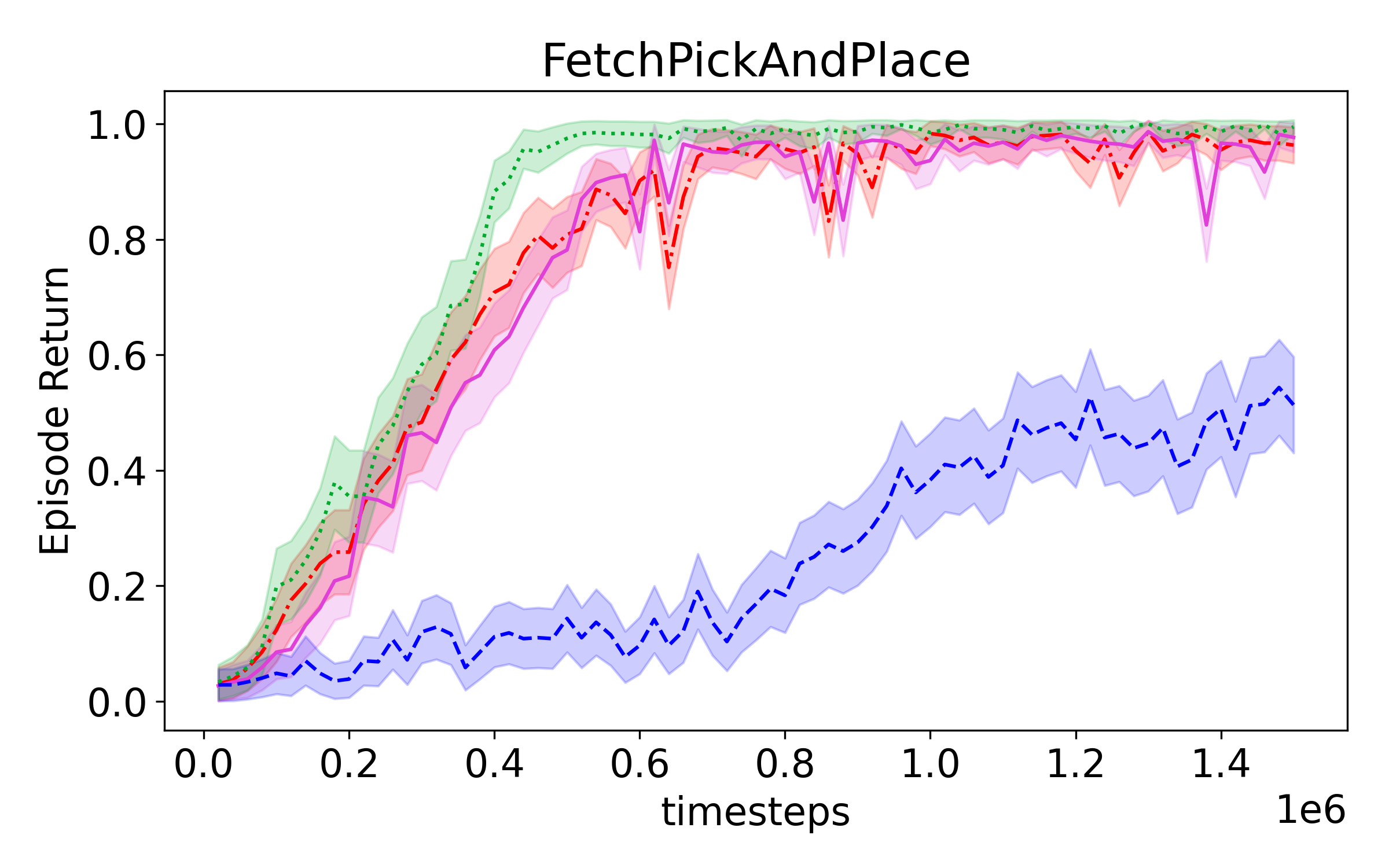}
    \caption{Pick and Place}
    \label{fig:app_pick}
    \end{subfigure}%
    \\
    \begin{subfigure}{.48\textwidth}
    \includegraphics[width=\textwidth]{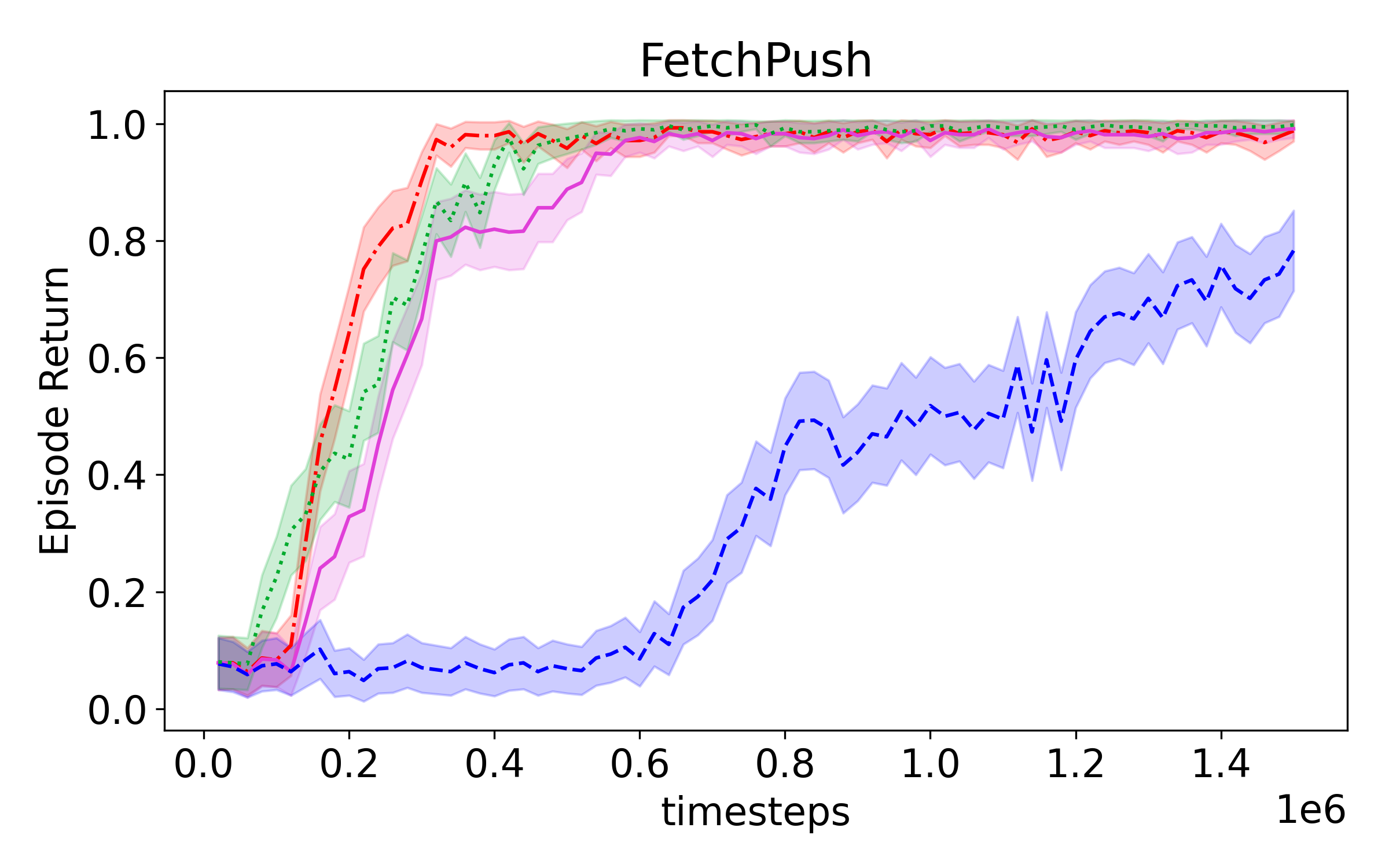}
    \caption{Push}
    \label{fig:app_push}
    \end{subfigure}%
    \hfill
    \begin{subfigure}{.48\textwidth}
    \includegraphics[width=\textwidth]{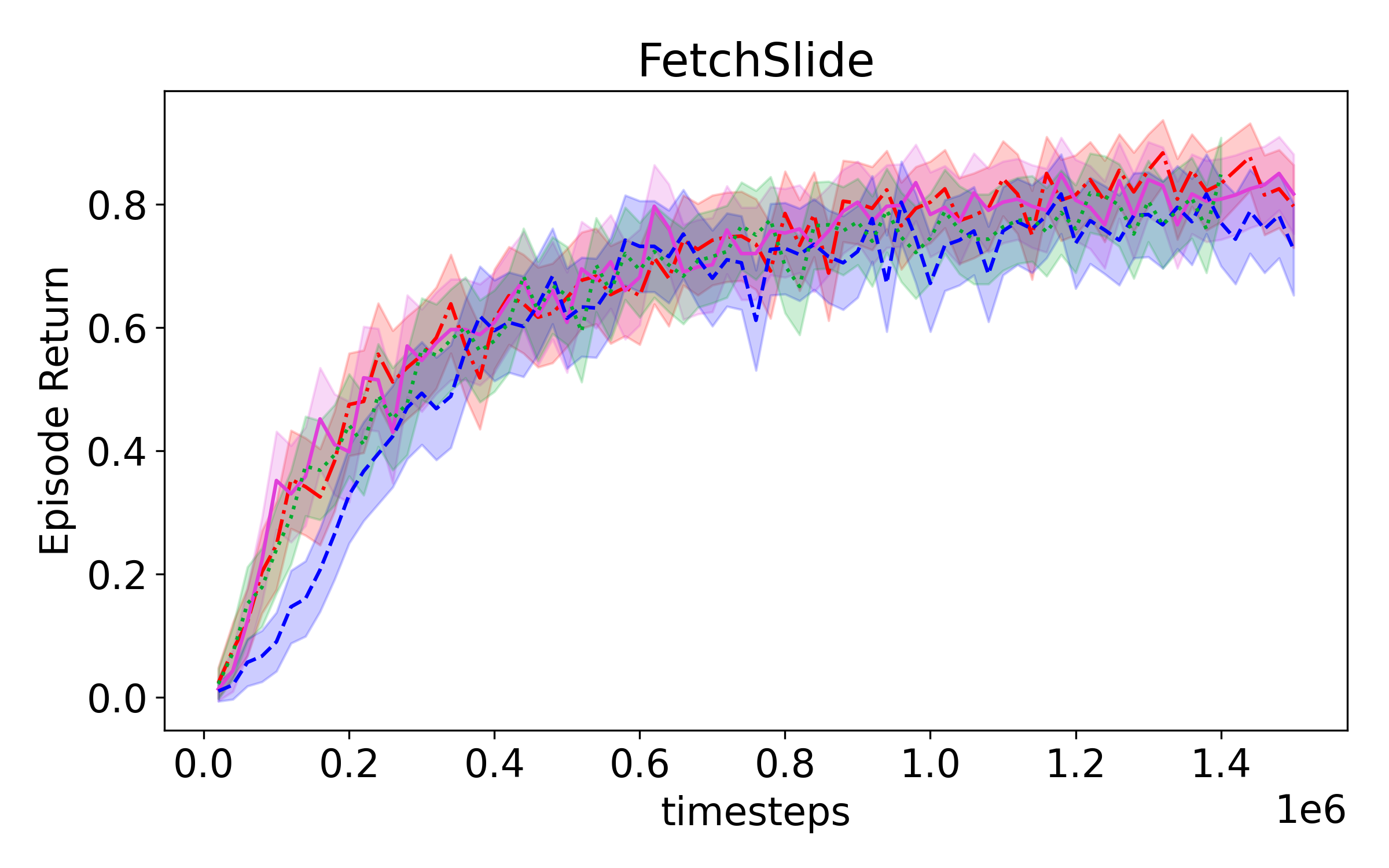}
    \caption{Slide}
    \label{fig:app_slide}
    \end{subfigure}
    \caption{Comparing [\aim\ + \textsc{her}] with an additional baseline which also uses the external task reward [\aim\ + R + \textsc{her}]. The additional grounding provided by the external task reward allows the agent's learning to accelerate even further.}
    \label{fig:app_fetch}
\end{figure}

\section{Statistical Analysis of the Results on Fetch Robot Tasks} \label{app:fetch_test}

To compare the performance of each method with statistical rigor, we used a repeated measures ANOVA design for binary observation where an observation is successful if an agent reaches the goal within an episode. We then conducted a Tukey test to compare the effects of each method, i.e., the estimated odds of reaching the goal given the algorithm. The goal of the statistical analysis presented here is twofold
\begin{enumerate}[itemsep=0pt]
    \item Separate the uncertainty on the performance of each method from the variation due to random seeds.
    \item Adjust the probability of making a false discovery due to multiple comparisons. This extra step is necessary to avoid detecting a large fraction of falsely ``significant" differences since typical tests are designed to control the error rate of only one experiment.
\end{enumerate}

The data for statistical analysis comes from $N_{\text{episodes}}=100$ evaluation episodes per each one of $N_{\text{seeds}}=6$ seeds. For all environments but FetchReach, these data is collected after 1 million environment interactions; and for FetchReach it is taken after 2000 interactions. 

The repeated measures ANOVA design is formulated as a mixed effects generalized linear model and fitted separately for each one of the four environments
\begin{align*}
y_{ijk} &\overset{\text{\tiny iid}}{\sim} \mathrm{Bernoulli}(p_{ij}), \quad &k\in\{1,\hdots, N_{\text{episodes}}\} \\
\text{logit}(p_{ij}) &= r_{\text{seed}_i} + \beta_{\text{algorithm}_j}\quad & i\in\{1,\hdots,N_{\text{seeds}}\}, 
j\in\{1,\hdots,N_{\text{algorithms}}\} \\
r_{\text{seed}_i} &\overset{\text{\tiny iid}}{\sim} \sim \mathrm{Normal}(0, \sigma^2)  \quad &
\end{align*}
The variation due to the seed effects is measured by $\sigma^2$, whereas the uncertainty about the odds of reaching the goal using each algorithm is measured by the standard errors of the coefficients $\beta_{\text{algorithm}_j}$. The Tukey test evaluates all null hypotheses $H_0\colon \beta_{\text{algorithm}_j} = \beta_{\text{algorithm}_{j'}}$ for all combinations of $j,j'$. To adjust for multiple comparisons each Tukey tests uses the Holm method. Since we are also doing a Tukey test for each environment, we further apply a Bonferroni adjustment with a factor of four. These types of adjustments are fairly common for dealing with multiple comparison in the literature of experimental design; the interested reader may consult \citep{montgomery2017design}.

The results, shown in Table \ref{tab:anova}, signal strong statistical evidence of the improvements from using the \aim\ learned rewards. In three of the four environments \aim\ and \aim + \textsc{r} have similar odds of reaching the goal as the dense shaped reward ($H_0$ is not rejected,) and in all four environments \aim\ and \aim + \textsc{r} have statistically significant higher odds of reaching the goal than the sparse reward ($H_0$ is rejected and $\beta$ is higher.) 
  
 \begin{table}[thb]
     \centering
     \footnotesize
     \begin{tabular}{c|r|r|r|r}
    Contrast & Slide & Push & PickAndPlace &  Reach\\ \midrule
    $\beta_{\text{\aim+\textsc{r}}} - \beta_{\text{\textsc{her}+dense}}$
    & 0.34 (0.14) & -1.74 (0.77) & -0.10 (0.45) & *-3.43 (0.34)
    \\
   $\beta_{\text{\aim}} - \beta_{\text{\textsc{her}+dense}}$
   &  0.21 (0.14) & -2.19 (0.75) & *-1.50 (0.37) & *-5.01 (0.35)
   \\ \midrule
    $\beta_{\text{\aim}+\textsc{r}} - \beta_{\text{\textsc{her}+sparse}}$ 
    &  *0.69 (0.13) & *5.32 (0.35) & *4.71 (0.33) & *4.75 (0.25)
    \\
       $\beta_{\text{\aim}} - \beta_{\text{\textsc{her}+sparse}}$
       &  *0.57 (0.13) & *4.86 (0.30) & *3.31 (0.19) & *3.17 (0.24)
     \end{tabular}
     
     \caption{Results of the Tukey test on the evaluation of Fetch tasks. The table entries are log odds ratios with standard deviations shown in parentheses. Positive values mean that \aim\ or \aim+\textsc{r} perform better than the method with negative sign in the contrast and viceversa. Asterisks mark statistical significance at 95\%. If there is no asterisk, then $H_0$ is not rejected in which case the differences could be due to random chance.}
     \label{tab:anova}
 \end{table}

\section{Details of Experiments on Fetch Robot} \label{app:fetch}

The Fetch robot domain in OpenAI gym has four tasks available for testing.
They are named Reach, Push, Slide, and Pick And Place.
The Reach task is the simplest, with the goal being the 3-d coordinates where the end effector of the robot arm must be moved to.
The Push task requires pushing an object from its current position on the table to the given target position somewhere else on the table.
Slide is similar to Push, except the coefficient of friction on the table is reduced (causing pushed objects to slide) and the potential targets are over a larger area, meaning that the robot needs to learn to hit objects towards the goal with the right amount of force.
Finally, Pick And Place is the task where the robot actuates it's gripper, picks up an object from its current position on the table and moves it through space to a given target position that could be at some height above the table.
The goal space for the final three tasks are the required position of the object, and the goal the current state represents is the current position of that object.

Next, we note the hyperparameters used for various baselines as well as our implementation.
The names of the hyperparameters are as specified in the stable baselines repository and used in the RL Zoo \citep{rl-zoo} codebase which we use for running experiments.
Both the stable baselines repository and RL Zoo are available under the MIT license.
These experiments were run on a compute cluster with each experiment assigned an Nvidia Titan V GPU, a single CPU and 12 GB of RAM.
Each run of the TD3 baseline \textsc{her} + \textsc{r} or \textsc{her} + dense required $18$ hours to execute, and each run which included \aim\ required $24$ hours to complete execution.

TD3 \citep{fujimoto2018TD3}, like its predecessor DDPG \citep{lillicrap2015continuous}, suffers from the policy saturating to extremes of its parameterization.
\citet{hausknecht2016deep} have suggested various techniques to mitigate such saturation.
We use a quadratic penalization for actions that exceed $80\%$ of the extreme value at either end, which is sufficient to not hurt learning and prevent saturation.
Assuming the policy network predicts values between $-1$ and $1$ (as is the case when using the tanh activation function), the regularization loss is:
\begin{align*}
    L_a = \frac{1}{N} \sum_{i=1}^{N} \left[max(|\pi_\theta(s_i)| - 0.8, 0)\right] ^ 2
\end{align*}
where $N$ is the mini-batch size and $s_i$ is the state for the $i^{\text{th}}$ transition in the batch.

The other modification made to the stable baselines code is to use the Huber loss instead of the squared loss for Q-learning.

For evaluation, in the Reach domain the agent policy is evaluated for 100 episodes every 2000 steps.
For the other three domains, the experiment is run for 1 million timesteps, and evaluated at every 20,000 steps for 100 episodes.

\subsection{TD3 and HER (R + HER)}

 \begin{tabular}{||c | c||} 
 \hline
 Hyperparameter & Value \\ [0.5ex] 
 \hline\hline
 n_sampled_goal & 4 \\ 
 \hline
 goal_selection_strategy & future  \\
 \hline
 buffer_size & $10^6$ \\
 \hline
 batch_size & 256 \\
 \hline
 $\gamma$ (discount factor) & $0.95$ \\
 \hline
 random_exploration & 0.3 \\
 \hline
 target_policy_noise & 0.2 \\
 \hline
 learning_rate & $1^{-3}$ \\
 \hline
 noise_type & normal \\
 \hline
 noise_std & $0.2$ \\
 \hline
 MLP size of agent policy and Q function & $[256, 256, 256]$ \\
 \hline
 learning_starts & 1000 \\
 \hline
 train_freq & 10 \\
 \hline
 gradient_steps & 10 \\
 \hline
 $\tau$ (target policy update rate) & 0.05 \\ [1ex] 
 \hline
\end{tabular}

\subsection{Dense reward TD3 and HER (dense + HER)}

 \begin{tabular}{||c | c||} 
 \hline
 Hyperparameter & Value \\ [0.5ex] 
 \hline\hline
 n_sampled_goal & 4 \\ 
 \hline
 goal_selection_strategy & future  \\
 \hline
 buffer_size & $10^6$ \\
 \hline
 batch_size & 256 \\
 \hline
 $\gamma$ (discount factor) & $0.95$ \\
 \hline
 random_exploration & 0.3 \\
 \hline
 target_policy_noise & 0.2 \\
 \hline
 learning_rate & $1^{-3}$ \\
 \hline
 noise_type & normal \\
 \hline
 noise_std & $0.2$ \\
 \hline
 MLP size of agent policy and Q function & $[256, 256, 256]$ \\
 \hline
 learning_starts & 1000 \\
 \hline
 train_freq & 100 \\
 \hline
 gradient_steps & 200 \\
 \hline
 policy_delay & 5 \\
 \hline
 $\tau$ (target policy update rate) & 0.05 \\ [1ex] 
 \hline
\end{tabular}

\subsection{TD3 and HER with AIM (AIM + HER) and (AIM + R + HER)}

 \begin{tabular}{||c | c||} 
 \hline
 Hyperparameter & Value \\ [0.5ex] 
 \hline\hline
 n_sampled_goal & 4 \\ 
 \hline
 goal_selection_strategy & future  \\
 \hline
 buffer_size & $10^6$ \\
 \hline
 batch_size & 256 \\
 \hline
 $\gamma$ (discount factor) & $0.9$ \\
 \hline
 random_exploration & 0.3 \\
 \hline
 target_policy_noise & 0.2 \\
 \hline
 learning_rate & $1^{-3}$ \\
 \hline
 noise_type & normal \\
 \hline
 noise_std & $0.2$ \\
 \hline
 MLP size of agent policy and Q function & $[256, 256, 256]$ \\
 \hline
 learning_starts & 1000 \\
 \hline
 train_freq & 100 \\
 \hline
 gradient_steps & 200 \\
 \hline
 disc_train_freq & 100 \\
 \hline
 disc_steps & 20 \\
 \hline
 $\tau$ (target policy update rate) & 0.1 \\ [1ex] 
 \hline
\end{tabular}

\end{document}